\documentclass{article}

\usepackage[preprint]{neurips_2026}
\makeatletter
\renewcommand{\@noticestring}{%
  Code: \url{https://github.com/armaan-abraham/lql}
}
\makeatother

\usepackage[utf8]{inputenc}
\usepackage[T1]{fontenc}

\DeclareUnicodeCharacter{00B7}{\ensuremath{\cdot}}
\DeclareUnicodeCharacter{2010}{-}
\DeclareUnicodeCharacter{0263}{\ensuremath{\gamma}}
\DeclareUnicodeCharacter{03A6}{\ensuremath{\Phi}}
\DeclareUnicodeCharacter{03B1}{\ensuremath{\alpha}}
\DeclareUnicodeCharacter{03B3}{\ensuremath{\gamma}}
\DeclareUnicodeCharacter{03BB}{\ensuremath{\lambda}}
\DeclareUnicodeCharacter{03C0}{\ensuremath{\pi}}
\DeclareUnicodeCharacter{03C3}{\ensuremath{\sigma}}
\DeclareUnicodeCharacter{03C7}{\ensuremath{\chi}}
\usepackage{microtype}
\usepackage{graphicx}
\usepackage{subcaption}
\usepackage{booktabs}
\usepackage{multirow}
\usepackage{makecell}
\usepackage{placeins}
\usepackage{float}
\usepackage{wrapfig}
\usepackage{enumitem}
\usepackage{xcolor}
\usepackage{nicefrac}

\usepackage{amsmath}
\usepackage{amssymb}
\usepackage{amsfonts}
\usepackage{mathtools}
\usepackage{amsthm}

\newcommand{\barnum}[1]{\ensuremath{\overline{\text{#1}}}}

\usepackage{algorithm}
\usepackage{algorithmic}

\usepackage{hyperref}
\usepackage{url}
\usepackage[capitalize,noabbrev]{cleveref}

\theoremstyle{plain}
\newtheorem{theorem}{Theorem}[section]

\newtheorem{lemma}[theorem]{Lemma}

\theoremstyle{definition}

\theoremstyle{remark}

\usepackage[disable,textsize=tiny]{todonotes}

\title{Long-Horizon Q-Learning: Accurate Value Learning via n-Step Inequalities}

\author{%
  Armaan A. Abraham \qquad Lucy Xiaoyang Shi \qquad Chelsea Finn \\
  Stanford University \\
  \texttt{armaana@stanford.edu} \\
}

\begin{document}

\maketitle

\begin{abstract}

Off-policy, value-based reinforcement learning methods such as Q-learning are appealing because they can learn from arbitrary experience, including data collected by older policies or other agents. In practice, however, bootstrapping makes long-horizon learning brittle: estimation errors at later states propagate backward through temporal-difference (TD) updates and can compound over time. We propose \emph{long-horizon Q-learning} (LQL), which introduces a principled backstop against compounding error when learning the optimal action-value function. LQL builds on a prior \emph{optimality tightening} observation: any realized action sequence lower-bounds what the optimal policy can achieve in expectation, so acting optimally earlier should not be worse than following the observed actions for several steps before switching to optimal behavior. Our contribution is to turn this inequality into a practical stabilization mechanism for Q-learning by using a hinge loss to penalize violations of these bounds. Importantly, LQL computes these penalties using network outputs already produced for the TD error, requiring no auxiliary networks and no additional forward passes relative to Q-learning. When combined with multiple state-of-the-art methods on a range of online and offline-to-online benchmarks, LQL consistently outperforms both 1-step TD and n-step TD learning at similar runtime.

\end{abstract}

\section{Introduction}

\begin{wrapfigure}{r}{0.55\textwidth}
    \vspace{-2.5\baselineskip}
    \centering
    \includegraphics[width=0.55\textwidth]{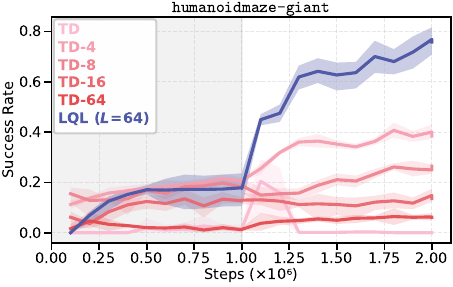}
    \caption{
        \textbf{LQL with long trajectories scales to the longest task in OGBench; $n$-step TD degrades as $n$ grows.}
        Sparse-reward \texttt{humanoidmaze-giant} (all tasks) with Best-of-$N$ policies. TD-$n$ at increasing $n$ and LQL with trajectory length $64$.
    }
    \label{fig:humanoidmaze_giant}
\end{wrapfigure}
Off-policy reinforcement learning holds the promise of turning past experience into future competence: by learning a value function, an agent can improve from data collected by older policies, other agents, or imperfect exploration, without requiring fresh on-policy rollouts at every update \citep{watkins_q-learning_1992,sutton_reinforcement_1998}. This promise is particularly compelling in domains where interaction is expensive and long-horizon behavior matters (e.g., robotics), where we would like to extract as much learning signal as possible from each transition. Classic value-based methods such as Q-learning do exactly this by \emph{bootstrapping}: they train a $Q$-function so that the utility of a state-action pair agrees with the immediate reward plus a discounted estimate of what the learned policy could achieve next, i.e., a counterfactual continuation under the policy being learned.

However, bootstrapping over long horizons is fragile in practice because estimation errors can amplify as they propagate backward through time \citep{jaakkola_convergence_1993,sutton_reinforcement_1998,asis_multi-step_2018,park_horizon_2025}.
A common mitigation is to reduce reliance on the next-state estimate by using multi-step targets (e.g., $n$-step or $\lambda$-returns), replacing a 1-step backup with a longer segment of observed rewards before switching back to a bootstrapped value estimate \citep{sutton_reinforcement_1998,hessel_rainbow_2017,asis_multi-step_2018}. Yet this remedy introduces its own tension in off-policy settings: an $n$-step target for $Q(s_t,a_t)$ necessarily incorporates the rewards generated by the \emph{logged} actions $a_{t+1},\ldots,a_{t+n-1}$ that followed in the replayed trajectory. As a result, even if $a_t$ is a high-quality decision, the resulting backup can be pessimistic when the subsequent recorded actions are low-quality.
Put differently, multi-step returns partially evaluate $(s_t,a_t)$ under the incorrect assumption that the agent will continue to follow the behavior that produced the trajectory for several more steps, rather than switching immediately to the (improving) policy being learned.

\begin{wrapfigure}{r}{0.45\textwidth}
    \vspace{-\intextsep}
    \centering
    \includegraphics[width=0.45\textwidth]{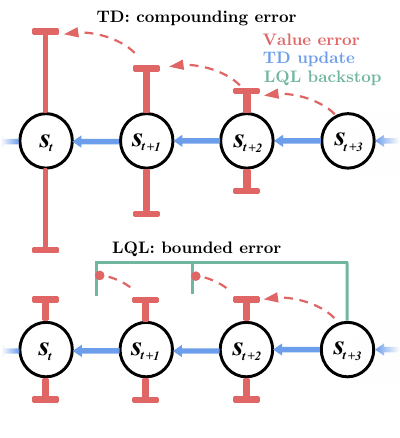}
    \caption{
        \textbf{LQL establishes a backstop against compounding TD error over time.} Standard 1-step TD can amplify estimation errors as they propagate backward through bootstrap updates (top). LQL's long-horizon constraints provide additional correction signals that bound these inconsistencies across multiple steps (bottom).
    }
\end{wrapfigure}
This paper asks: can we keep the simplicity and low-variance of 1-step off-policy TD learning, while introducing a principled long-horizon backstop that detects and corrects the kinds of inconsistencies that lead to compounding error? Our starting point is a standard consequence of optimality---and one previously leveraged for ``optimality tightening''---that any realized sequence of actions provides a lower bound on what the optimal policy could achieve in expectation from the same starting state \citep{he_learning_2016}. Concretely, for any trajectory of experience, the optimal value of the initial decision cannot be \emph{worse} (in expected return) than committing to the observed actions for a while and only then behaving optimally. We refer to this constraint as an \emph{optimality inequality} over trajectories.

Motivated by this inequality, we introduce a training objective that augments the standard TD error with \emph{hinge} penalty terms that enforce long-horizon consistency asymmetrically. Intuitively, if a value estimate at some state-action is \emph{below} the return realized by an actual action sequence, the estimate is pushed upward; conversely, if combining a later value estimate with the observed actions leading up to it would imply doing \emph{better} than acting optimally earlier, the later estimate is pushed downward. These hinge losses provide a trajectory-level correction signal, but can be computed from the same $Q$-values already used in standard TD updates.

Our main contribution is \emph{long-horizon Q-learning} (LQL), a general off-policy value learning approach that is agnostic to the policy extraction mechanism. LQL samples trajectories from the replay buffer and constructs hinge penalties using only network outputs already computed within a typical TD update (e.g., $Q(s_t,a_t)$ and the bootstrapping value at the next state), requiring no auxiliary networks and no additional forward passes per update compared to Q-learning. Empirically, across locomotion and robot manipulation tasks from OGBench \citep{park_ogbench_2025} and RoboMimic \citep{mandlekar_what_2021}, and across multiple ways of extracting a policy from the learned value function, LQL improves over TD learning while incurring minor additional runtime cost per training iteration.

\section{Related work}
\label{sec:related}

\textbf{Off-policy learning.}
Off-policy value-based RL (e.g., Q-learning) can in principle learn from arbitrary experience \citep{watkins_q-learning_1992,sutton_reinforcement_1998,mnih_human-level_2015}, but is often unstable in practice. This is commonly attributed to the \emph{deadly triad}---off-policy learning, bootstrapping, and function approximation---which causes TD errors to propagate and amplify over long horizons \citep{baird_residual_1995,tsitsiklis_analysis_1997,sutton_reinforcement_1998,vanhasselt2018deepreinforcementlearningdeadly,park_horizon_2025}. LQL addresses this issue by augmenting TD learning with a temporally extended \emph{consistency backstop} that penalizes value inconsistencies across temporally distant states.

\textbf{Multi-step backups.}
Multi-step backups, such as $n$-step returns and $\lambda$-returns, accelerate reward propagation and reduce sensitivity to compounding 1-step TD error \citep{sutton_learning_1988,watkins_q-lambda_1989,peng_incremental_1996,sutton_reinforcement_1998,mnih_asynchronous_2016,hessel_rainbow_2017,asis_multi-step_2018,schulman_high-dimensional_2018,chebotar_q-transformer_2023,schwarzer_bigger_2023,daley_averaging_2025}. However, they define targets from logged trajectories, coupling early actions to subsequent (possibly suboptimal) behavior. LQL instead uses trajectories to impose an \emph{optimality inequality}, regularizing long-horizon consistency while retaining standard TD updates.

\textbf{Off-policy corrections for multi-step learning.}
In off-policy settings, multi-step targets are biased under policy mismatch \citep{sutton_reinforcement_1998}. Importance sampling provides principled corrections \citep{precup_eligibility_2000} but suffers from high variance at long horizons \citep{hernandez-garcia_understanding_2019}, motivating truncated variants such as Retrace and V-trace \citep{munos_safe_2016,espeholt_impala_2018}. Many such methods require behavior and target action likelihoods \citep{sutton_reinforcement_1998}, which is inconvenient for expressive generative policies (e.g., diffusion or flow-matching) increasingly used in robotics \citep{ho_denoising_2020,song_score-based_2021,chi_diffusion_2023,ren_diffusion_2024,wagenmaker_steering_2025,park_flow_2025,black__05_2025,ai_joint_2026}. LQL is complementary: it derives hinge losses from an inequality characterization of the optimal $Q^\star$ and uses only existing TD network outputs, without action likelihoods or additional forward passes.

\textbf{Optimality tightening.}
LQL builds on the classical characterization of optimal value functions via Bellman (in)equalities \citep{bellman_dynamic_1957,puterman_markov_1994,sutton_reinforcement_1998}. A closely related method, \emph{optimality tightening} \citep{he_learning_2016}, also derives hinge losses from optimality inequalities. LQL differs by formulating the inequality using policy-generated actions, allowing reuse of the same network outputs as the TD loss and avoiding auxiliary networks and extra forward passes. In contrast, optimality tightening requires additional $Q$ evaluations per update (at least $2\times$, and $4\times$ in \citet{he_learning_2016}), which can substantially reduce wall-clock efficiency \citep{hessel_rainbow_2017,lee_sample-efficient_2019}.

\section{Preliminaries}
\label{sec:prelims}

We consider a Markov Decision Process (MDP) $\mathcal{M}=(\mathcal{S},\mathcal{A},P,R,\gamma)$ with discount $\gamma\in[0,1)$. A (stochastic) policy $\pi(\cdot\mid s)\in\Delta(\mathcal{A})$ induces trajectories $(s_0,a_0,r_0,s_1,\ldots)$ with $r_t=R(s_t,a_t)$ and $s_{t+1}\sim P(\cdot\mid s_t,a_t)$. The goal is to maximize the expected discounted return $\mathbb{E}_\pi[\sum_{t=0}^\infty \gamma^t r_t]$. The action-value function for $\pi$ is
$
Q^\pi(s,a)=\mathbb{E}_\pi[\sum_{t=0}^\infty \gamma^t r_t \mid s_0=s,a_0=a],
$
and the optimal action-value function $Q^*$ satisfies the Bellman optimality equation
\begin{equation}
\label{eq:bellman_opt}
Q^*(s,a)=\mathbb{E}_{s'\sim P(\cdot\mid s,a)}\!\left[ R(s,a)+\gamma \max_{a'} Q^*(s',a') \right].
\end{equation}

\textbf{Q-learning and target networks.}
In the function approximation setting, Q-learning trains a network $Q_\theta$ by minimizing a temporal-difference (TD) loss on replayed transitions $(s,a,r,s')$:
\begin{equation}
\label{eq:td_error_sample}
\ell_{\mathrm{TD}}(\theta)=\left(Q_\theta(s,a)-\big[r+\gamma \,Q_{\bar\theta}(s',a^*(s'))\big]\right)^2,
\end{equation}
where $Q_{\bar\theta}$ is a slowly-updated target network (e.g., soft updates $\bar\theta \leftarrow \tau \theta + (1-\tau)\bar\theta$ with $\tau\ll 1$), and $a^*(s')$ denotes the action used for bootstrapping. In discrete action spaces $a^*(s')=\arg\max_{a'} Q_{\bar\theta}(s',a')$; in continuous control we typically use a learned actor $\pi_\phi$ as a computational proxy for the maximizer, i.e., $a^*(s')=\pi_\phi(s')$.

\textbf{Trajectory notation.}
We will work with \emph{trajectories} of experience of length $L$ drawn from the replay buffer, $(s_{t:t+L},\, a_{t:t+L-1},\, r_{t:t+L-1})$, where $s_{t:t+L}$ denotes $(s_t,s_{t+1},\ldots,s_{t+L})$ and similarly for actions/rewards. For indices $i<j\le t+L$, define the discounted partial return along the observed trajectory as $G_{i:j}\triangleq\sum_{u=i}^{j-1}\gamma^{u-i} r_u$.

\section{Long-horizon Q-learning}
\label{sec:lql}

This section introduces \emph{long-horizon Q-learning} (LQL). The goal is to preserve the strengths of standard TD learning---transition-level, counterfactual bootstrapping---while adding a \emph{long-horizon backstop} that discourages the inconsistent value estimates responsible for compounding error. Concretely, we:
(i) derive trajectory-wise optimality inequalities,
(ii) interpret them as \emph{soft constraints} on the learned value function over replay-buffer trajectories, and
(iii) enforce these constraints using asymmetric (hinge) penalties that can be computed by \emph{reusing the same network evaluations} already required for TD learning.
We first present the constraint formulation, then derive the resulting penalty-based objective, and finally describe the practical sampled approximation and implementation details.

\subsection{Optimality inequalities over trajectories}
A consequence of optimality is that, from any state, acting optimally immediately cannot be worse (in expectation) than committing to an arbitrary sequence of actions for some duration and only then behaving optimally. One convenient form is: for any $i<j$,
\begin{equation}
\label{eq:optimality_ineq_trajectory}
Q^*(s_i,a_i)\;\ge\;\mathbb{E}\!\left[\,G_{i:j}+\gamma^{j-i} Q^*(s_j,a_j)\,\right],
\end{equation}
where the expectation is over the MDP dynamics (and any stochasticity in the action sequence, if applicable). We will use two variants that replace one side by an \emph{optimal action} to match the computation available in TD learning (for $j>i$):
\begin{align}
\label{eq:optimality_ineq_optLHS}
Q^*(s_i,a^*(s_i))
&\ge \mathbb{E}\!\Big[
G_{i:j}
+ \gamma^{j-i} Q^*(s_j,a_j)
\Big], \\
\label{eq:optimality_ineq_optRHS}
Q^*(s_i,a_i)
&\ge \mathbb{E}\!\Big[
G_{i:j}
+ \gamma^{j-i} Q^*(s_j,a^*(s_j))
\Big].
\end{align}
When $j=i+1$, Equation~\eqref{eq:optimality_ineq_optRHS} reduces to the Bellman optimality equation \eqref{eq:bellman_opt}.

\subsection{From optimality constraints to optimization objective}
\label{subsec:constrained_to_loss}
The inequalities in Equations \eqref{eq:optimality_ineq_optLHS}--\eqref{eq:optimality_ineq_optRHS} specify desirable properties of $Q^*$ over temporally distant states: the value of acting optimally earlier should upper bound the value implied by deferring optimality until later, and conversely, observed actions followed by later optimal behavior should lower bound what can be achieved when acting optimally earlier. A natural way to encode such inequalities in learning is via a constrained problem $\min_{\theta}\ \mathbb{E}[\ell_{\mathrm{TD}}(\theta)]$ subject to Equations~\eqref{eq:optimality_ineq_optLHS} and \eqref{eq:optimality_ineq_optRHS}. Rather than enforcing hard constraints, we use a standard penalty/Lagrangian-style relaxation: each inequality violation contributes a nonnegative penalty, yielding an unconstrained objective consisting of the TD loss plus weighted constraint penalties. Concretely, for each replayed trajectory we will create two families of penalties: \textbf{Lower-bound (LB) penalties} that push \emph{up} $Q_\theta(s_k,a_k)$ when it falls below a return realized by rolling forward along the trajectory and then bootstrapping with an (approximate) optimal action at a later state. \textbf{Upper-bound (UB) penalties} that push \emph{down} $Q_\theta(s_k,a_k)$ when combining it with preceding observed rewards would imply outperforming an optimal action taken earlier.

\subsection{Two-sided hinge penalties}
\label{subsec:hinge_penalties}

Consider a replay trajectory $(s_{t:t+L},a_{t:t+L-1},r_{t:t+L-1})$. For brevity, re-index positions within the trajectory as follows: for $k\in\{0,\ldots,L-1\}$, $(s_k,a_k,r_k)\triangleq(s_{t+k},a_{t+k},r_{t+k})$, and define $s_L\triangleq s_{t+L}$. Let $a_k^*\triangleq a^*(s_k)$ denote the bootstrap action at state $s_k$ (argmax in discrete actions, or $\pi_\phi(s_k)$ in continuous control). We use $G_{k:\ell}$ as defined above.

\paragraph{Lower-bound penalties.}
Using Equation~\eqref{eq:optimality_ineq_optRHS}, for any $\ell>k$ we would like
$
Q(s_k,a_k)\gtrsim G_{k:\ell} + \gamma^{\ell-k} Q(s_\ell,a_\ell^*).
$
Formally, Equation~\eqref{eq:optimality_ineq_optRHS} holds \emph{in expectation} over trajectories (and any randomness in the environment and policy). In practice, we approximate this expectation using a single sampled trajectory, i.e., we treat $G_{k:\ell}$ and $(s_\ell,a_\ell^*)$ from the replayed trajectory as a Monte Carlo estimate of the right-hand side.

We implement a soft version with a hinge-squared penalty where the bootstrap side uses the target network:
\begin{equation}
\label{eq:delta_LB}
\delta_{\mathrm{LB}}(k,\ell;\theta)
\;=\;
\Big[
G_{k:\ell}+\gamma^{\ell-k}Q_{\bar\theta}(s_\ell,a_\ell^*)
-\;Q_\theta(s_k,a_k)
\Big]_+^2.
\end{equation}
Intuitively, if the observed rewards plus a (target) bootstrap at time $\ell$ exceed the current estimate at time $k$, we push $Q_\theta(s_k,a_k)$ upward.
We use a hinge-squared form: the hinge makes the constraint one-sided (we only penalize violations), while squaring yields smoother gradients when violations occur. We aggregate over a set of future indices $\mathcal{F}(k)\subseteq\{k+1,\ldots,L\}$:
\begin{equation}
\label{eq:ell_LB}
\ell_{\mathrm{LB}}(k;\theta)
=
\frac{1}{|\mathcal{F}(k)|}\sum_{\ell\in\mathcal{F}(k)} \delta_{\mathrm{LB}}(k,\ell;\theta).
\end{equation}
A practical choice is to exclude $\ell=k+1$ since that case largely overlaps with the 1-step TD target; e.g., $\mathcal{F}(k)=\{k+2,\ldots,L\}$, and $\ell_{\mathrm{LB}}(k;\theta)=0$ if $\mathcal{F}(k)$ is empty.

\paragraph{Upper-bound penalties.}
Using Equation~\eqref{eq:optimality_ineq_optLHS}, for any $i \le k$ we would like
$
Q(s_i,a_i^*) \gtrsim G_{i:k} + \gamma^{k-i} Q(s_k,a_k).
$
Violations correspond to $Q(s_k,a_k)$ being too large relative to what could be achieved by acting optimally earlier, as measured by $Q(s_i,a_i^*)$. We penalize such violations with
\begin{equation}
\label{eq:delta_UB}
\delta_{\mathrm{UB}}(i,k;\theta)
\;=\;
\Big[
G_{i:k}+\gamma^{k-i}Q_\theta(s_k,a_k)
-\;Q_{\bar\theta}(s_i,a_i^*)
\Big]_+^2,
\end{equation}
where the optimal-action value on the right is evaluated by the target network and treated as a stable reference. Importantly, the special case $i=k$ yields a same-state upper bound: since $G_{k:k}=0$ and $\gamma^{0}=1$, the constraint reduces to $Q_{\bar\theta}(s_k,a_k^*) \ge Q_\theta(s_k,a_k)$, providing direct downward pressure when a suboptimal action is valued above the (approximate) greedy action at the same state. We aggregate upper-bound penalties for each $k$ by averaging over indices $i$ for which the target-network quantity $Q_{\bar\theta}(s_i,a_i^*)$ is already available from the TD computation on the same replay chunk. Concretely, since TD targets evaluate $Q_{\bar\theta}$ on next-states, we have target evaluations for states up to and including $s_k$ (via the immediately preceding transition), so we use $\mathcal{P}(k)\subseteq\{1,\ldots,k\}$, which includes the same-state upper bound term $i=k$ without requiring additional forward passes. We then define
\begin{equation}
\label{eq:ell_UB}
\ell_{\mathrm{UB}}(k;\theta)
=
\frac{1}{|\mathcal{P}(k)|}\sum_{i\in\mathcal{P}(k)} \delta_{\mathrm{UB}}(i,k;\theta),
\end{equation}
and set $\ell_{\mathrm{UB}}(k;\theta)=0$ if $\mathcal{P}(k)$ is empty (e.g., $k=0$). In contrast to our formulation of the upper bounds, \citet{he_learning_2016} formulates them using logged actions rather than actions from the learned policy at the earlier states, which precludes both the reuse of target-network outputs from earlier in the trajectory and the same-state upper bound. This difference in formulation allows us to use no additional forward passes in computing the hinge penalties.

\subsection{Final training objective}
\label{subsec:final_objective}

For each transition index $k\in\{0,\ldots,L-1\}$ in the trajectory, let $\ell_{\mathrm{TD}}(k;\theta)$ denote the 1-step TD loss \eqref{eq:td_error_sample} computed on $(s_k,a_k,r_k,s_{k+1})$. LQL augments TD learning with the long-horizon penalties above:
\begin{equation}
\label{eq:lql_loss_sample_revised}
\ell_{\mathrm{LQL}}(k;\theta)
=
\ell_{\mathrm{TD}}(k;\theta)
+\lambda_{\mathrm{UB}}\,\ell_{\mathrm{UB}}(k;\theta)
+\lambda_{\mathrm{LB}}\,\ell_{\mathrm{LB}}(k;\theta),
\end{equation}
with nonnegative weights $\lambda_{\mathrm{UB}},\lambda_{\mathrm{LB}}$. \emph{In this work, we do not tune these weights for the main baseline comparison experiments (e.g., Figure~\ref{fig:aggregate_success}) and use $\lambda_{\mathrm{UB}}=\lambda_{\mathrm{LB}}=1$}; we only vary them in the ablations in Figures ~\ref{fig:sample_ctrl} and ~\ref{fig:lambda_sweep}. We use the descriptor ``long-horizon'' in naming long-horizon Q-learning due to the fact that the hinge penalties $\ell_{\mathrm{UB}},\ell_{\mathrm{LB}}$ relate value estimates across temporal distances greater than one, in contrast to $\ell_{\mathrm{TD}}$. The overall loss for a sampled trajectory is the average over $k$: $\mathcal{L}_{\mathrm{LQL}}=\frac{1}{L}\sum_{k=0}^{L-1}\ell_{\mathrm{LQL}}(k;\theta)$.

\subsection{Practical notes}
\label{subsec:practical}

The inequalities in Equation~\eqref{eq:optimality_ineq_optLHS}--\eqref{eq:optimality_ineq_optRHS} are stated in expectation over dynamics. In LQL, we approximate these expectations using a sampled replay trajectory: the observed rewards within the trajectory provide an unbiased sample of the return prefix, while the remaining tail is approximated by a bootstrap term $Q_{\bar\theta}(s_\ell,a_\ell^*)$. Under deterministic transition dynamics, this sample-based approximation is unbiased; under stochastic dynamics, it introduces bias, which we explore further in Section~\ref{subsec:stochastic_envs} (Figure~\ref{fig:stochastic_envs}), Section~\ref{sec:conclusion}, and Appendix~\ref{app:false_penalties}. Crucially, the additional penalties in Equations \eqref{eq:delta_LB} and \eqref{eq:delta_UB} can be computed using the same forward passes already required by TD learning on the trajectory:
\begin{itemize}[topsep=0pt,itemsep=0pt,parsep=0pt]
\item We evaluate $Q_\theta(s_k,a_k)$ for all observed state-action pairs in the trajectory (needed for $\ell_{\mathrm{TD}}$).
\item We evaluate $Q_{\bar\theta}(s_k,a_k^*)$ for bootstrap actions at the corresponding states (needed for TD targets).
\end{itemize}
No auxiliary networks are introduced, and we do not require additional forward passes beyond those used to compute TD losses along the trajectory. In implementation, we stop gradients through all target-network quantities $Q_{\bar\theta}(\cdot,\cdot)$, and treat $a_k^*$ as a bootstrap action (computed via argmax or sampling from the actor) exactly as in standard off-policy TD learning. Finally, we compute the discounted partial returns $G_{i:j}$ efficiently using prefix sums along the trajectory, enabling evaluation of all selected pairs $(i,k)$ and $(k,\ell)$ with negligible additional overhead.

\textbf{Compatibility with existing value learning methods.} We derive the above results by considering the augmentation of 1-step TD learning, in which case we establish hinge penalties that apply across transitions with a minimum separation of one time step---ultimately, for simplicity. However, these hinge penalties can also be applied to $n$-step TD learning and action chunking ~\citep{li_reinforcement_2025}, where the hinge penalties would be established over a minimum separation of $n$ (or the action chunk size) time steps; we also experimentally evaluate LQL on action chunking below.

\begin{algorithm}[tb]
\caption{LQL update on a replay trajectory}
\label{alg:lql_update}
\begin{algorithmic}
\STATE \textbf{Input:} replay buffer $\mathcal{D}$; online $Q_\theta$; target $Q_{\bar\theta}$; bootstrap policy/actor $\pi_\phi$ (or argmax); step size $\eta$
\STATE Sample a trajectory $(s_{0:L},a_{0:L-1},r_{0:L-1}) \sim \mathcal{D}$
\STATE Compute bootstrap actions $a_k^* \leftarrow \pi_\phi(s_k)$ for $k=0,\ldots,L$ \;\; (or $a_k^*=\arg\max_a Q_{\bar\theta}(s_k,a)$)
\STATE Evaluate $Q_\theta(s_k,a_k)$ for $k=0,\ldots,L-1$
\STATE Evaluate $Q_{\bar\theta}(s_k,a_k^*)$ for $k=0,\ldots,L$ \\
$\mathcal{L}_{\mathrm{LQL}}=\frac{1}{L}\sum_{k=0}^{L-1}\Big(\ell_{\mathrm{TD}}(k;\theta)+\lambda_{\mathrm{UB}}\ell_{\mathrm{UB}}(k;\theta)+\lambda_{\mathrm{LB}}\ell_{\mathrm{LB}}(k;\theta)\Big)$
\STATE $\theta \leftarrow \theta - \eta \nabla_\theta \mathcal{L}_{\mathrm{LQL}}$
\STATE Update target network (e.g., $\bar\theta\leftarrow \tau\theta+(1-\tau)\bar\theta$)
\end{algorithmic}
\end{algorithm}

\section{Experiments}
\label{sec:experiments}

We test when and why LQL helps:

\begin{enumerate}[topsep=0pt,itemsep=0pt,parsep=0pt,leftmargin=*]
    \item Across diverse
    offline-to-online tasks and policy classes, how does LQL compare to
    1-step TD, length-matched $n$-step TD, and the closest prior
    optimality-inequality method, OT~\citep{he_learning_2016}?
    \item Does trajectory length offer a useful
    scaling axis, particularly where TD batch-size scaling is known to
    fail~\citep{fu_compute-optimal_2025}?
    \item Does LQL retain its benefit under
    stochastic dynamics, where the in-expectation inequalities
    introduce a theoretical bias?
    \item Are gains attributable to the hinge
    penalties themselves rather than to side effects of trajectory
    sampling, and do they leave a measurable signature on
    value-function stability?
\end{enumerate}

We apply LQL to two of its three compatible settings---1-step TD and
action chunking---leaving the $n$-step TD case to future work.

\subsection{Environments and datasets}
\label{subsec:envs}

We evaluate on robot manipulation and navigation tasks from
OGBench~\citep{park_ogbench_2025} and RoboMimic~\citep{mandlekar_what_2021},
which have several favorable properties for this
evaluation (more details in Appendix~\ref{app:tasks}): \textbf{Long horizons:} In OGBench's \texttt{humanoidmaze-giant}, a 21-DoF humanoid must traverse mazes that take thousands of environment steps to solve, with a single sparse reward upon reaching the goal. 1-step TD must therefore propagate signal across an extreme bootstrap chain, which is the failure mode LQL targets. \textbf{Behavioral heterogeneity:} In RoboMimic, we use multi-human datasets in which trajectories were collected by operators of varying proficiency. This is a regime where $n$-step targets in principle suffer from the contribution of low-quality logged actions. \textbf{Stitching dependence:} In OGBench, \texttt{humanoidmaze} and \texttt{cube-triple} use the \texttt{navigate}-style and \texttt{play}-style datasets respectively, which are collected by exploratory rather than task-directed policies. This makes stitching together trajectories from these exploratory policies important, which is also a known theoretical limitation of TD-$n$~\citep{precup_eligibility_2000,munos_safe_2016}. We use two standard protocols
(Appendix~\ref{app:offline_online_protocols}): RLPD-style continual
mixing for the Gaussian actor, and two-stage offline$\rightarrow$online
training for the others. Behavior-regularization coefficients (e.g.,
$\alpha$ in FQL/QC-FQL) are identical for LQL, TD, and TD-$n$
(Table~\ref{table:alpha_sets}).

\subsection{Comparison methods and protocol}
\label{subsec:comparisons}

To isolate the effect of the value-learning rule, we train Q-functions
using \textbf{(a) 1-step TD}, \textbf{(b) length-matched $n$-step TD}
($n=L=8$), and \textbf{(c) LQL}, then apply identical policy extraction
mechanisms to each learned value function. We treat 1-step TD as a
baseline to isolate the contribution of LQL's backstop beyond standard
bootstrapped learning and length-matched $n$-step TD as an existing
recipe for incorporating multi-step information into off-policy value
learning. For breadth, we also report
\textbf{IQL}~\citep{kostrikov_offline_2021} and
\textbf{ReBRAC}~\citep{tarasov_revisiting_2023}. We compare
directly to optimality tightening~\citep{he_learning_2016} in
Section~\ref{subsec:ot_comparison}.

\textbf{Compute parity.} For all critic-learning rules we match the total
number of Q-network forward passes per gradient step. With a batch size
of $1024$ transitions and LQL trajectory length $8$, LQL samples $128$
starting indices uniformly from the replay buffer and unrolls length-$8$
segments; TD and $n$-step TD use the same effective number of transitions
per step. More details are in Appendix~\ref{app:impl_details}.

\textbf{Policy extraction families.} To test that improvements from the
critic carry across actor parameterizations, we evaluate four policy
extraction mechanisms representative of contemporary offline-to-online RL:
a \textbf{Gaussian actor} (RLPD-style~\citep{ball_efficient_2023}),
\textbf{Best-of-$N$ (BFN)} sampling from a flow-matching behavior
policy~\citep{stiennon_learning_2022}, \textbf{Flow Q-learning
(FQL)}~\citep{park_flow_2025}, and an \textbf{action-chunked policy} via
the QC-FQL recipe of~\citet{li_reinforcement_2025}. Action chunking is
particularly important because it underlies many state-of-the-art
real-world robot policies~\citep{zhao_learning_2023,amin__06_2025}.

\begin{figure}[t]
    \centering
    \includegraphics[width=\linewidth]{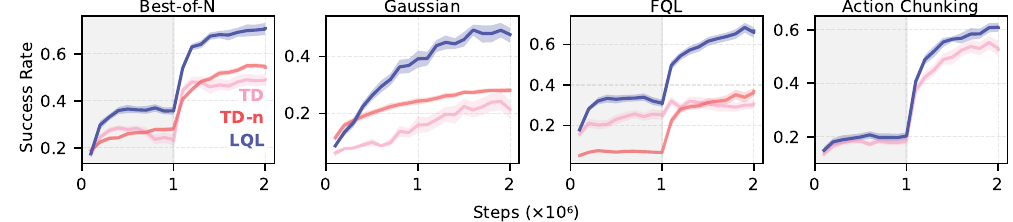}
    \caption{
        \textbf{Across policy-extraction families, LQL achieves higher average success than TD and TD-$n$.} Mean success rate averaged over all environments, separated by policy type. Background shading indicates whether the update includes an environment interaction step (white) or is purely offline (gray).
    }
    \label{fig:aggregate_success}
    \vspace{-3mm}
\end{figure}

\begin{figure}[t]
    \centering
    \includegraphics[width=\linewidth]{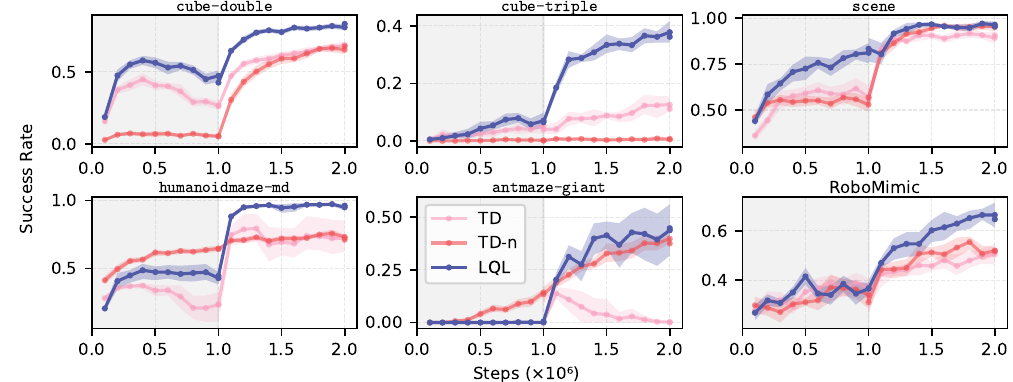}
    \caption{
        \textbf{For Best-of-$N$ policies, LQL improves over both 1-step TD and TD-$n$ across task groups.} Each panel aggregates success rates within a task group over training.
    }
    \vspace{-3mm}
    \label{fig:aggregate_success_bfn}
\end{figure}

\subsection{Comparative evaluation across tasks and policy classes}
\label{subsec:main_results}

Per-task and per-suite results appear in
Tables~\ref{table:OGBench_success} and~\ref{table:robomimic_success};
Best-of-$N$ aggregates are in Figure~\ref{fig:aggregate_success_bfn}, and
the cross-policy aggregation is in Figure~\ref{fig:aggregate_success}.
Across all four policy extraction mechanisms, LQL improves average
success rate over both 1-step TD and length-matched $n$-step TD. LQL's
advantage over $n$-step TD indicates that its improvement is not just
from using $n$-step returns. Length-matched $n$-step TD
often provides only a modest improvement over 1-step TD, and on several
tasks (e.g., \texttt{cube-triple} with FQL) it performs worse, consistent
with the well-known off-policy bias of $n$-step
targets~\citep{precup_eligibility_2000,munos_safe_2016}. LQL's more
favorable usage of the multi-step target is most visible on tasks
(\texttt{humanoidmaze-md}, \texttt{cube-triple}) where success is most
reliant on stitching together segments of independently suboptimal
offline experience.

\textbf{Compute overhead.} These gains come with a small per-update
slowdown of $4.7\%$ on average across policy classes
(Table~\ref{table:runtimes}, Appendix~\ref{app:compute_requirements}),
arising from computing pairwise discounted returns within each sampled
trajectory. This overhead does not scale with critic or actor size, so
its relative cost shrinks as networks grow. We also observe negligible
costs for $L=64$ (Appendix~\ref{app:compute_requirements}).

\subsection{Trajectory length as an axis for scaling}
\label{subsec:trajectory_length}

\begin{wrapfigure}{r}{0.5\textwidth}
    \vspace{-3\baselineskip}
    \centering
    \includegraphics[width=\linewidth]{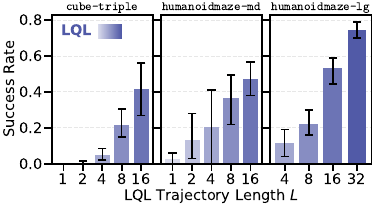}
    \caption{
        \textbf{LQL performance improves as trajectory length $L$ grows.}
        Each panel sweeps $L$ at fixed trajectories per batch, so larger $L$
        means strictly more compute per step. \emph{Left, middle:}
        FQL actor, 128 trajectories per batch, on \texttt{task3} and
        \texttt{task4} respectively. \emph{Right:} Best-of-$N$ actor,
        64 trajectories per batch, on \texttt{task3}. Network sizes are held
        fixed. Increasing $L$ raises final
        success rate in all three settings.
    }
    \label{fig:horizon_scale}
\end{wrapfigure}

A practical concern with deep value-based RL is that, unlike supervised
learning, increasing the batch size for TD does not reliably improve
performance and frequently degrades it past a
threshold~\citep{fu_compute-optimal_2025} (we reproduce this in
Figure~\ref{fig:horizon_batch_scale}). This is hypothesized to reflect
overfitting to inaccurate target estimates. We investigate whether
LQL's trajectory length offers an alternative scaling axis that does not
suffer this pathology. Figure~\ref{fig:horizon_scale} sweeps trajectory length $L$ at fixed number of trajectories per batch (so larger $L$ means more transitions
per update). In all cases, LQL's final success rate
improves with $L$ over the swept range. This contrasts with the
corresponding TD batch-size sweep, where additional transitions yield
no improvement or hurt performance (Figure~\ref{fig:horizon_batch_scale}).

We stress-test this in the longest task in OGBench, sparse-reward
\texttt{humanoidmaze-giant}, with $L=64$
(Figure~\ref{fig:humanoidmaze_giant}, Table~\ref{table:batch_scale_giant}).
Here 1-step TD never solves a single task ($0\%$ across all five tasks).
$n$-step TD partially helps but plateaus at $n=4$
($38.4\%$ averaged) and \emph{degrades} as $n$ grows further, with
$n=64$ achieving only $6.1\%$. LQL with $L=64$ achieves $75.7\%$,
saturating two tasks ($97.3\%$ and $98.7\%$). We read this as direct evidence that the hinge constraints,
unlike multi-step TD targets, can absorb information from very long
trajectories without inheriting the off-policy bias that causes $n$-step
TD to deteriorate at large $n$.

\subsection{Further experiments}
\label{subsec:further_experiments}

\textbf{Mechanism: hinge penalties vs.\ trajectory sampling.}
\label{subsec:mechanism}
LQL differs from 1-step TD in two ways simultaneously: it samples short
trajectories rather than independent transitions, and it adds
hinge penalties on top of the TD loss. To attribute the gains correctly,
we ablate $\lambda_{\mathrm{LB}}=\lambda_{\mathrm{UB}}=0$, which preserves
the trajectory sampling but removes the long-horizon constraints, leaving
1-step TD trained on trajectory-sampled minibatches
(Figure~\ref{fig:sample_ctrl}, Appendix~\ref{app:sample_ctrl}).
Removing the hinge penalties substantially reduces performance, verifying
their importance. Interestingly, we see that trajectory sampling
sometimes outperforms standard transition-level TD, which we discuss
further in Appendix~\ref{app:sample_ctrl}.

\textbf{Robustness to environment stochasticity.}
\label{subsec:stochastic_envs}
We next ask whether the theoretical bias induced by environmental
stochasticity materially harms practical performance.
We construct stochastic versions of OGBench tasks by adding zero-mean
Gaussian noise with standard deviation $\sigma$ to actions before
execution and recollecting matched offline datasets at each $\sigma$
(Appendix~\ref{app:stochastic_envs_details}). Across all measured $\sigma$,
LQL matches or outperforms 1-step TD (Figure~\ref{fig:stochastic_envs}).

\textbf{Sensitivity to hinge coefficient.}
\label{subsec:lambda_sweep}
All comparative results above use
$\lambda_{\mathrm{LB}} = \lambda_{\mathrm{UB}} = 1$, chosen as a simple
default rather than tuned per task. Figure~\ref{fig:lambda_sweep} sweeps
these coefficients while holding all other hyperparameters fixed.
Performance is broadly stable across roughly an order of magnitude
around the default.

\begin{wrapfigure}{r}{0.5\textwidth}
\vspace{-\intextsep}
    \centering
    \vspace{2\baselineskip}
    \includegraphics[width=\linewidth]{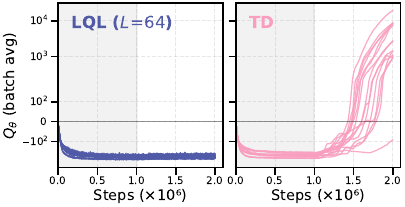}
    \caption{
        \textbf{LQL keeps $Q$-values within the analytically valid range;
        1-step TD diverges.}
        Average online $Q_\theta(s,a)$ during training on \texttt{humanoidmaze-giant} (rewards in $\{-1,0\}$),
        one curve per task/seed. Since rewards are non-positive, $Q^*\le 0$
        everywhere.
    }
    \label{fig:q_blowup_giant}
\end{wrapfigure}

\textbf{$Q$-value stability.}
\label{subsec:why_lql_works}
We also analyzed the learned value function for TD and LQL directly.
Figure~\ref{fig:q_blowup_giant} plots the average online $Q$-value
during training on \texttt{humanoidmaze-giant}, where the reward is in
$\{-1, 0\}$, so $Q^*$ must be non-positive everywhere. For 1-step TD,
14/15 training runs blow up, with the average $Q$ crossing zero
and growing to magnitudes exceeding $900$. With LQL, $Q$-values remain in
the analytically valid range across all tasks. This is direct evidence
of the hinge upper-bound penalty preventing the common runaway overestimation
of TD.

\textbf{Comparison to optimality tightening.}
\label{subsec:ot_comparison}
LQL's most similar prior work, OT~\citep{he_learning_2016}, differs from
our implementation as described in Sections~\ref{sec:related}
and~\ref{sec:lql}. We adapt the public OT
codebase to run on our evaluations and show in
Appendix~\ref{app:ot_cmp} that LQL outperforms OT, supporting the design
choices that distinguish the two methods.

\section{Conclusion}
\label{sec:conclusion}

We introduced \emph{long-horizon Q-learning} (LQL), a practical
algorithm for mitigating compounding TD error in off-policy value
learning. Starting from the optimality inequality---a principled
relationship that constrains value estimates separated by any time
horizon---we derived two-sided hinge penalties that reuse the network
outputs already produced by standard TD updates, requiring no auxiliary
networks and no additional forward passes. Across locomotion and
manipulation tasks from OGBench and RoboMimic and across four
policy-extraction families, LQL consistently improves over both 1-step
TD and $n$-step TD at a small runtime cost. Most strikingly, on the sparse-reward \texttt{humanoidmaze-giant}---the longest task in OGBench---LQL achieves
$75.7\%$ success across the five tasks while 1-step TD fails to solve
any of them and
$n$-step TD plateaus at $n{=}4$ and degrades thereafter. These results carry two broader implications: LQL provides
a path for absorbing information from very long trajectories without
inheriting the off-policy bias that limits $n$-step targets
\citep{precup_eligibility_2000,munos_safe_2016}, and trajectory length
emerges as a scaling axis that does not exhibit the pathological degradation of batch scaling in TD \citep{fu_compute-optimal_2025}.

The main limitation of this work is theoretical: the optimality
inequality is an \emph{in-expectation} statement, so per-sample
penalties introduce a bias in stochastic environments. We give a
horizon-independent bound on this bias in
Appendix~\ref{app:false_penalties}; the bound tightens as the
suboptimality of the behavior data grows, and
Figure~\ref{fig:stochastic_envs} shows LQL matching or outperforming
1-step TD across all noise levels we tested. Several directions follow
naturally: applying the same hinge framework on top of $n$-step TD,
which could combine LQL's stability with $n$-step TD's faster reward
propagation; a more mechanistic study of how the upper- and lower-bound
terms each contribute to the $Q$-value stability we observe in
Figure~\ref{fig:q_blowup_giant}; and pushing trajectory length beyond
$L=64$, where our monotonic sweep suggests substantial headroom
remains.

\section{Acknowledgments}
This work was supported by the Robotics and AI Institute (RAI) and ONR grant N00014-22-1-2621.

\newpage
\appendix
\FloatBarrier
\section{Experiment Details}

\subsection{Evaluation Protocol}
\label{app:eval_protocol}

For all methods, we run 4 seeds for each \texttt{task} in each OGBench task group (e.g., \texttt{cube-double-play-singletask-task1-v0}) and 4 seeds for each task in RoboMimic (e.g., \texttt{can}). Throughout the paper, confidence intervals are 95\% intervals from a curve-level bootstrap (1000 iterations) over training runs; for multi-task aggregates, we resample runs within each task following \citet{agarwal_deep_2021}.

\subsection{Tasks}
\label{app:tasks}

\begin{figure}[H]
  \centering
  \begin{tabular}{cccc}
    \includegraphics[width=0.21\linewidth]{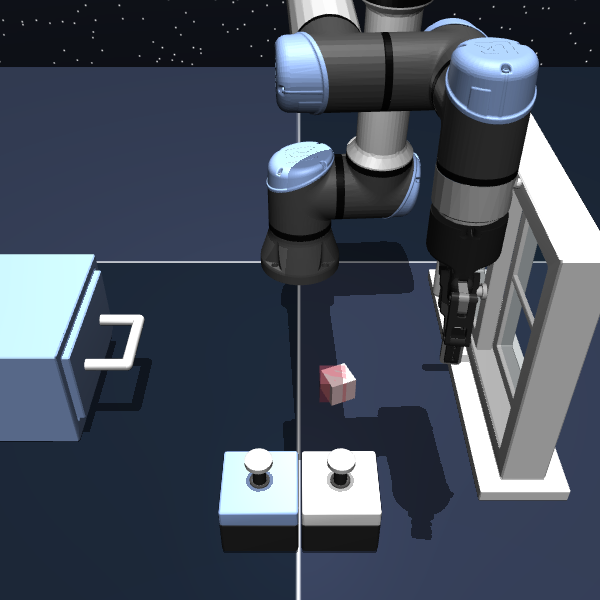} &
    \includegraphics[width=0.21\linewidth]{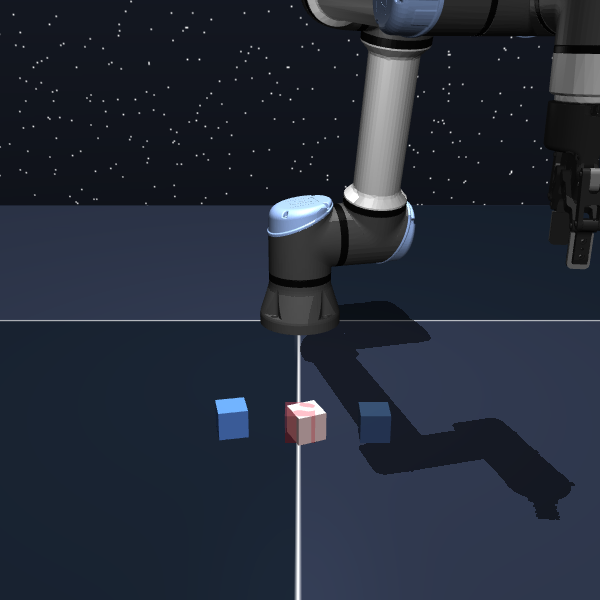} &
    \includegraphics[width=0.21\linewidth]{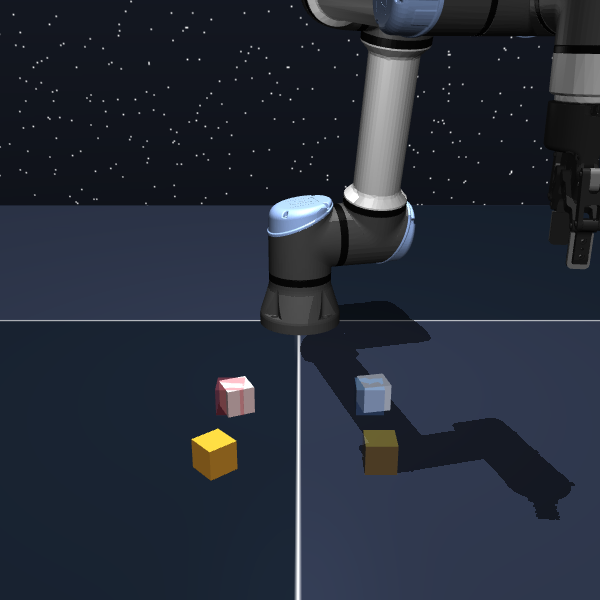} &
    \includegraphics[width=0.21\linewidth]{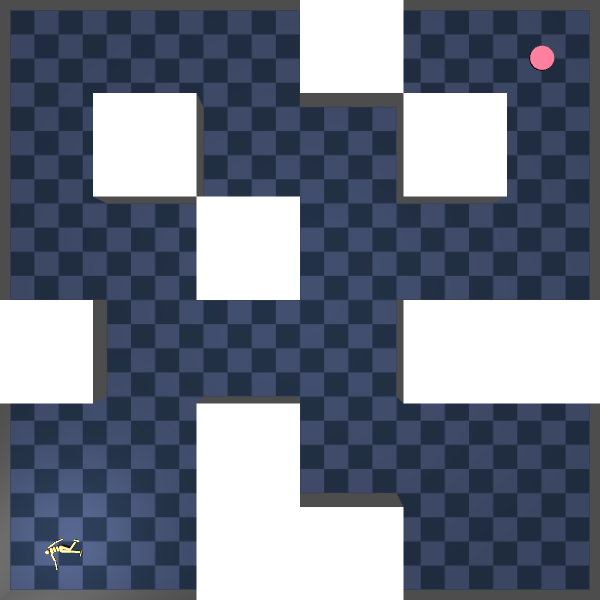} \\
    {\small (a) \texttt{scene}} & {\small (b) \texttt{cube-double}} & {\small (c) \texttt{cube-triple}} & {\small (d) \texttt{humanoidmaze-md}} \\[4pt]
    \includegraphics[width=0.21\linewidth]{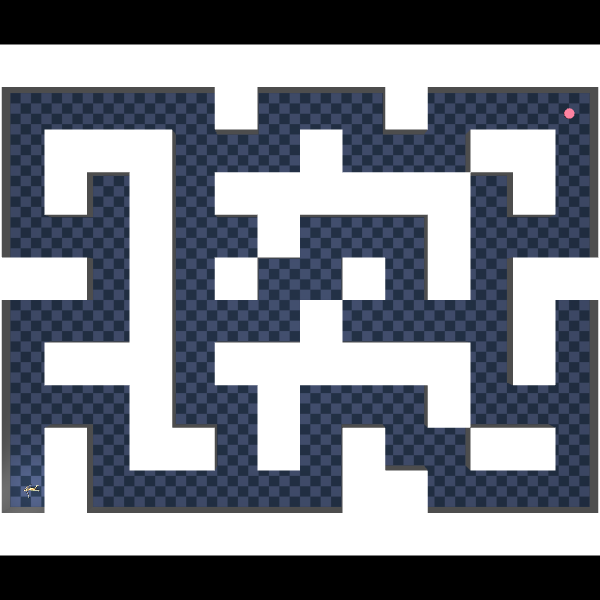} &
    \includegraphics[width=0.21\linewidth]{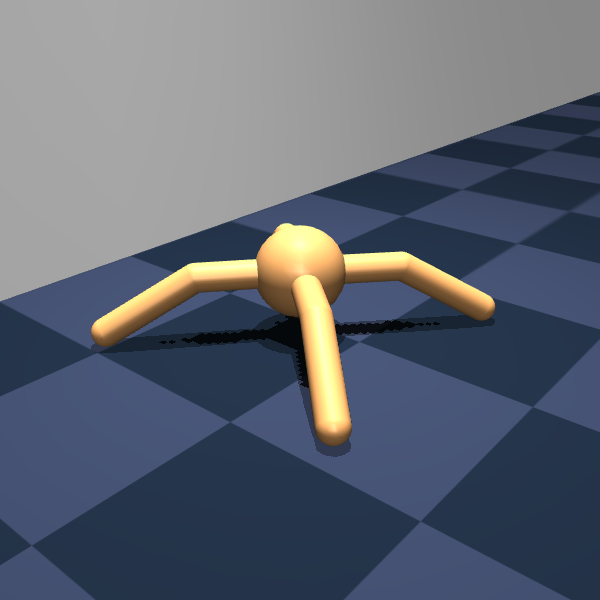} &
    \includegraphics[width=0.21\linewidth]{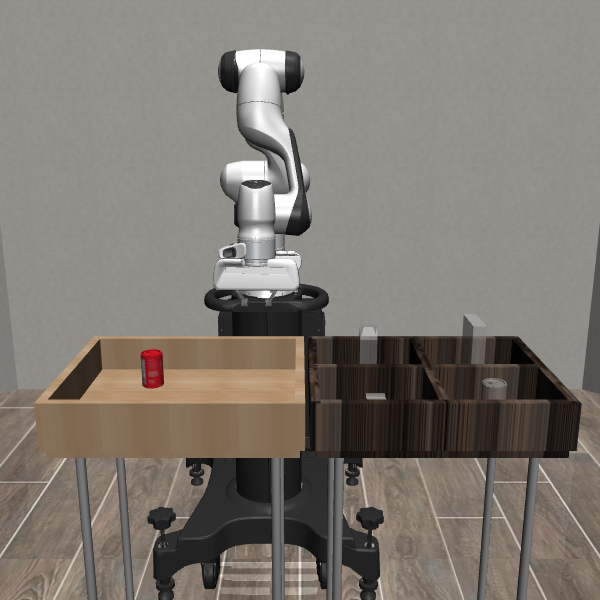} &
    \includegraphics[width=0.21\linewidth]{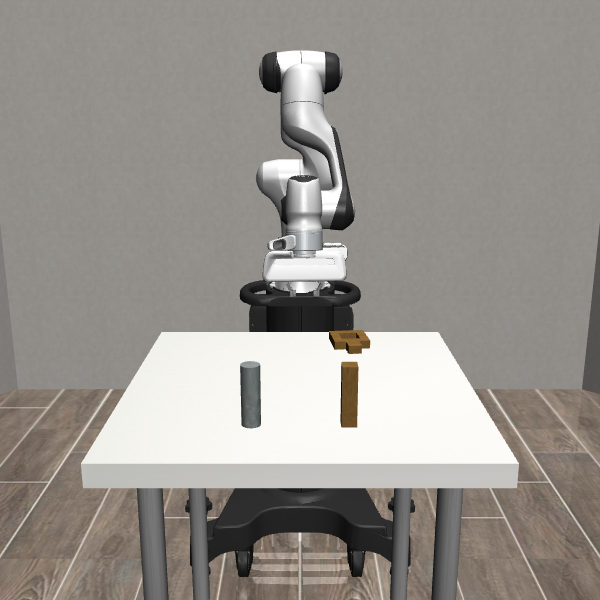} \\
    {\small (e) \texttt{humanoidmaze-giant}} & {\small (f) \texttt{antmaze-giant}} & {\small (g) \texttt{can}} & {\small (h) \texttt{square}} \\
  \end{tabular}
  \caption{Task panel for the OGBench~\citep{park_ogbench_2025} and RoboMimic~\citep{mandlekar_what_2021} domains we evaluate on. The \texttt{antmaze-giant} maze layout matches \texttt{humanoidmaze-giant}.}
  \label{fig:task_panel}
\end{figure}

\textbf{OGBench tasks.} We use the single-task variants of OGBench, which fix an evaluation goal and relabel the dataset with a corresponding (semi-)sparse reward.
\begin{itemize}
  \item \texttt{scene}: A robot arm interacts with a drawer, a window, a cube, and two button locks that gate the drawer and window. Each task requires a sequence of manipulations such as unlocking, opening, placing, and closing. We modify the reward to be sparse, similar to~\citet{li_reinforcement_2025}.
  \item \texttt{cube-double}, \texttt{cube-triple}: A robot arm rearranges 2 or 3 cubes to specified target configurations.
  \item \texttt{humanoidmaze-md}, \texttt{humanoidmaze-giant}: A 21-DoF Humanoid agent navigates a maze. The \texttt{giant} maze contains paths that can require thousands of environment steps to traverse. For \texttt{humanoidmaze-giant}, we use the 100M-transition offline dataset released by the original authors subsequent to the OGBench publication. Sparse reward.
  \item \texttt{antmaze-giant}: An 8-DoF quadrupedal Ant agent navigates the giant maze. Sparse reward.
\end{itemize}

\textbf{RoboMimic tasks.} We use the multi-human demonstration datasets, which contain trajectories collected by multiple human operators of varying proficiency.
\begin{itemize}
  \item \texttt{can}: A robot arm picks a soda can and places it in a target bin.
  \item \texttt{square}: A robot arm picks a square nut and places it on a rod.
\end{itemize}

\begin{table}[H]
  \caption{Task characteristics. Episode length refers to the maximum number of environment steps per episode.}
  \centering
  \begin{small}
    \begin{tabular}{cccc}
      \toprule
      \textbf{Task group} & \textbf{Dataset size (transitions)} & \textbf{Episode length} & \textbf{Action dimension} \\
      \midrule
      \texttt{scene} & 1M & 750 & 5 \\
      \texttt{cube-double} & 1M & 500 & 5 \\
      \texttt{cube-triple} & 3M & 1000 & 5 \\
      \texttt{humanoidmaze-md} & 2M & 2000 & 21 \\
      \texttt{humanoidmaze-giant} & 100M & 4000 & 21 \\
      \texttt{antmaze-giant} & 1M & 1000 & 8 \\
      \texttt{can} & 62756 & 500 & 7 \\
      \texttt{square} & 80731 & 500 & 7 \\
      \bottomrule
    \end{tabular}
    \label{table:task_characteristics}
  \end{small}
\end{table}

\subsection{Implementation details}
\label{app:impl_details}

Here, we outline a few more implementation details of LQL. The surrounding training infrastructure is described in Appendix \ref{app:offline_online_protocols}, so we focus here on implementation details following the receipt of a shape \texttt{(batch, trajectory)} array of transitions. For learning single-action policies, the array we receive contains $128$ trajectories of $8$ transitions. The loss in Equation \ref{eq:lql_loss_sample_revised} is then averaged over each trajectory, and an actor loss specific to each policy class is computed from all $128\times8$ observations and, for policies involving behavioral cloning, corresponding realized actions. For action chunking policies, we enforce our consistent length-$8$ trajectory at the action chunk level, meaning we sample \texttt{(batch, trajectory * chunk\_size)=(128, 8*5)=(128, 40)} transitions. In the same way as non-LQL action chunk Q learning is performed, we then evaluate the value network with the first observation of each chunk and all of the actions in the chunk as input, which yields $8$ value function evaluations per trajectory, equivalently to LQL applied to single-action value learning. All \texttt{(128, 8)} action chunks are then passed to an \texttt{actor\_loss} function, which performs an FQL \citep{park_flow_2025} actor update on each chunk.

\subsection{Policy extraction details}
\label{app:policy_extraction}

The four policy extraction families described in Section~\ref{subsec:comparisons} share the same critic learned by TD, TD-$n$, or LQL, but differ in how actions are selected from the value function. Here we summarize each method and note any deviations from the original recipes.

\textbf{Best-of-$N$ (BFN).}
We train a flow-matching behavior policy $\mu_\theta(s, z): \mathcal{S}\times\mathbb{R}^{\dim(\mathcal{A})}\to\mathcal{A}$ with the standard flow-matching behavioral cloning loss, as in \citet{park_flow_2025}, integrating the flow ODE with the Euler method using $10$ steps. At action-selection time, we sample $N=16$ noise vectors $z\sim\mathcal{N}(0, I)$, push each through the flow to obtain candidate actions, and choose the one that maximizes the learned $Q$ \citep{stiennon_learning_2022}. No reparameterized policy gradient is taken through the flow, so the flow is trained only to match the data distribution and the $Q$-function performs all of the policy improvement.

\textbf{Flow Q-learning (FQL).}
FQL \citep{park_flow_2025} keeps the same BC flow $\mu_\theta$ as above and additionally trains a separate one-step policy $\mu_\omega(s, z): \mathcal{S}\times\mathbb{R}^{\dim(\mathcal{A})}\to\mathcal{A}$ that maps a noise sample directly to an action with a single network forward pass. The one-step policy is trained with
\[
\mathcal{L}_\pi(\omega) = \mathbb{E}_{s, z}\!\left[-Q_\theta(s, \mu_\omega(s, z))\right] + \alpha\, \mathbb{E}_{s, z}\!\left[\|\mu_\omega(s, z)-\mu_\theta(s, z)\|_2^2\right],
\]
where the second term distills the multi-step BC flow into the one-step policy. \citet{park_flow_2025} show that this distillation loss is an upper bound on the squared 2-Wasserstein distance between the two implied action distributions, so $\alpha$ acts as a behavioral cloning coefficient that controls how tightly the one-step policy stays near the BC flow; we use the values listed in Table~\ref{table:alpha_sets}. At deployment, action selection uses the one-step policy directly.

\textbf{Gaussian (entropy-regularized).}
Our Gaussian actor follows the soft actor-critic (SAC) formulation \citep{haarnoja_soft_2018} combined with the offline-data design choices from RLPD \citep{ball_efficient_2023}. The actor outputs a tanh-squashed diagonal Gaussian $\pi_\phi(\cdot\mid s)$ and is trained with the standard maximum-entropy objective:
\[
\mathcal{L}_\pi(\phi) = \mathbb{E}_s\!\left[-Q_\theta(s, a^\pi) + \alpha_{\mathrm{ent}}\log\pi_\phi(a^\pi\mid s)\right],
\]
where $a^\pi\sim\pi_\phi(\cdot\mid s)$ is sampled by the standard reparameterization trick. The temperature $\alpha_{\mathrm{ent}}$ is automatically tuned against a target entropy of $-0.5\dim(\mathcal{A})$ via the standard dual update on $\alpha_{\mathrm{ent}}$. From RLPD we adopt symmetric sampling, in which each minibatch is split equally between transitions sampled from the offline dataset and transitions sampled from the online replay buffer. We deviate from the original RLPD recipe in two places: we use $2$ critics with LayerNorm rather than the $10$-critic ensemble used in the original paper, and we omit the entropy backup in the critic target (Table~\ref{table:hyperparams}).

\textbf{Action-chunked (QC-FQL-style).}
For the action-chunked policy we follow the recipe of \citet{li_reinforcement_2025}. Both the critic and actor operate over action chunks $a_{t:t+h}$ of length $h=5$: the critic is $Q_\theta(s_t, a_{t:t+h})$, and the actor uses the FQL parameterization above to produce the entire chunk in a single forward pass from a single noise sample. Because the critic conditions on the full executed action sequence, the corresponding $h$-step return is unbiased even when the data are off-policy \citep{li_reinforcement_2025}. The actor update is identical to FQL applied at the chunk level, again with $\alpha$ from Table~\ref{table:alpha_sets}, and the LQL hinge penalties are imposed across action chunks rather than individual transitions, as described in Appendix~\ref{app:impl_details}.

\subsection{Hyperparameters}
\label{app:hyperparams}

\begin{table}[H]
  \vspace{-1\baselineskip}
  \caption{Common hyperparameters.}
  \centering
  \begin{small}
    \begin{tabular}{cc}
      \toprule
      \textbf{Parameter} & \textbf{Value} \\
      \midrule
      Batch size (number of transitions) & 1024 \\
      LQL trajectories per batch & 128 \\
      LQL trajectory length & 8 \\
      Discount factor & 0.99 \\
      Optimizer & Adam \citep{kingma_adam_2017} \\
      Learning rate & $3 \times 10^{-4}$ \\
      Target network update rate & $5 \times 10^{-3}$ \\
      Critic ensemble size & 2 \\
      UTD Ratio & 1 \\
      Number of flow steps & 10 \\
      Number of samples in Best-of-$N$ sampling & 16 \\
      Number of training steps & $2 \times10^6$ \\
      Number of online training steps & $2 \times10^6$ for RLPD-based approaches, $1 \times 10^6$ otherwise \\
      Network width & 512 \\
      Network depth & 4 \\
      Activation function & GELU \citep{hendrycks_gaussian_2023} \\
      Actor layer norm & No \\
      Critic layer norm & Yes \\
      Entropy backup in RLPD & No \\
      Action chunk size & $5$ \\
      LQL $\lambda_{\mathrm{UB}}, \lambda_{\mathrm{LB}}$ & $1$ (except in Figures~\ref{fig:sample_ctrl} and~\ref{fig:lambda_sweep}) \\
      Target Q aggregation & OGBench: mean, RoboMimic: min \\
      \bottomrule
    \end{tabular}
    \label{table:hyperparams}
  \end{small}
\end{table}

\textbf{\texttt{humanoidmaze-giant}-specific hyperparameters.} We use the 100M-transition navigate-style dataset released by the OGBench~\citep{park_ogbench_2025} authors, and a discount factor of $0.995$. We use the same number of trajectories per batch as in the main experiments (Figure~\ref{fig:aggregate_success_bfn}, $128$), but with a longer trajectory length $L=64$. Similar to the earlier experiments (Table~\ref{table:hyperparams}), we match the batch size of each TD variant to LQL at the transition level: $8192$ ($128\times64$). Besides this, all other configuration is identical to Table~\ref{table:hyperparams}, including network architectures and the use of Best-of-$N$ actors with $N=16$.

\begin{table}[H]
  \centering
  \caption{$\alpha$ values by task group, used for both FQL and QC.}
  \begin{small}
    \begin{tabular}{lc}
      \toprule
      Task group & $\alpha$ \\
      \midrule
      \texttt{humanoidmaze-md} & $10$ \\
      \texttt{antmaze-giant} & $5$ \\
      \texttt{cube-double}, \texttt{cube-triple}, \texttt{scene} & $100$ \\
      RoboMimic (\texttt{can}, \texttt{square}) & $250$ \\
      \bottomrule
    \end{tabular}
    \label{table:alpha_sets}
  \end{small}
\end{table}

For ReBRAC and IQL, we reuse results from \citet{li_reinforcement_2025} for all OGBench tasks besides \texttt{humanoidmaze-md} and \texttt{antmaze-giant}. For IQL, \citet{li_reinforcement_2025} use $\alpha=0.3$ for \texttt{cube-*}, $\alpha=10.0$ for \texttt{puzzle-3x3}, and $\tau=0.9$. For both \texttt{humanoidmaze-md} and \texttt{antmaze-giant}, we similarly use $\tau=0.9$ and $\alpha=10$. For ReBRAC, \citet{li_reinforcement_2025} use $\alpha=0.1$, and we use an actor behavioral cloning coefficient of $0.01$ and a critic behavioral cloning coefficient of $0.01$ for \texttt{humanoidmaze-md}, following \citet{park_flow_2025}, and an actor behavioral cloning coefficient of $0.003$ and a critic behavioral cloning coefficient of $0.01$ for \texttt{antmaze-giant}.

\subsection{Offline-to-online training and prior data sampling protocols.}
\label{app:offline_online_protocols}

\textbf{Offline-to-online training.} For each of the Best-of-$N$, FQL, and action chunking policies, we first sample from an offline dataset of demonstrations for $1\times10^6$ iterations for training. Afterwards, this offline dataset is used to fill a fixed-size replay buffer of capacity $2\times10^6$ transitions, which is then refreshed with a new transition collected from online experience every gradient step, for an additional $1\times10^6$ steps.

\textbf{Online training with prior data.} For gaussian policies, there is no offline learning phase; every gradient update is accompanied by an environment step. An empty replay buffer of capacity $2\times10^6$ transitions is filled only with online experience from the agent. Every gradient step, half the batch is sampled from an offline dataset of demonstrations, and the other half is sampled from the online replay buffer.

\textbf{Action chunking online.} Our action chunking training protocol takes inspiration from \citet{li_reinforcement_2025}. Every online training step, the chunked policy is sampled from, yielding a chunk of actions. Each action in the chunk is executed in the environment, one action per gradient update, until the chunk is exhausted.

\newpage

\section{Full Results}

\begin{table}[H]
  \vspace{-1\baselineskip}
  \caption{\textbf{OGBench results.} Each cell is the success rate (\%) at the end of online training (mean across seeds, equal-weight across the 5 tasks per group). \textbf{Bold} marks methods within 95\% of the row maximum; an overbar marks methods within 95\% of the per-actor-type maximum. The Total row is the equal-weight mean over the 5 OGBench groups. Cells without a bootstrap confidence interval are taken from prior work~\citep{li_reinforcement_2025}.}
  \centering
  \begin{small}
  \resizebox{\textwidth}{!}{%
  \setlength{\tabcolsep}{4pt}
  \renewcommand{\arraystretch}{1.7}
    \begin{tabular}{l|ccc|ccc|ccc|cc|cc}
      \toprule
       & \multicolumn{3}{c|}{\textbf{Best-of-$N$}} & \multicolumn{3}{c|}{\textbf{FQL}} & \multicolumn{3}{c|}{\textbf{Gaussian}} & \multicolumn{2}{c|}{\textbf{Action Chunking}} & \multicolumn{2}{c}{\textbf{Other baselines}} \\
        \cmidrule(lr){2-4} \cmidrule(lr){5-7} \cmidrule(lr){8-10} \cmidrule(lr){11-12} \cmidrule(lr){13-14}
       & TD & TD-$n$ & LQL & TD & TD-$n$ & LQL & TD & TD-$n$ & LQL & TD & LQL & ReBRAC & IQL \\
      \midrule
      \texttt{cube-double} & \makecell[tc]{68.3 \\[-3pt] {\scriptsize \textcolor{black!55}{[66,71]}}} & \makecell[tc]{67.9 \\[-3pt] {\scriptsize \textcolor{black!55}{[66,69]}}} & \makecell[tc]{\barnum{83.5} \\[-3pt] {\scriptsize \textcolor{black!55}{[81,86]}}} & \makecell[tc]{18.3 \\[-3pt] {\scriptsize \textcolor{black!55}{[16,21]}}} & \makecell[tc]{49.8 \\[-3pt] {\scriptsize \textcolor{black!55}{[46,53]}}} & \makecell[tc]{\barnum{86.2} \\[-3pt] {\scriptsize \textcolor{black!55}{[79,91]}}} & \makecell[tc]{0.2 \\[-3pt] {\scriptsize \textcolor{black!55}{[0,1]}}} & \makecell[tc]{0.0 \\[-3pt] {\scriptsize \textcolor{black!55}{[0,0]}}} & \makecell[tc]{\barnum{18.7} \\[-3pt] {\scriptsize \textcolor{black!55}{[18,19]}}} & \makecell[tc]{\barnum{\textbf{94.2}} \\[-3pt] {\scriptsize \textcolor{black!55}{[92,96]}}} & \makecell[tc]{\barnum{\textbf{95.1}} \\[-3pt] {\scriptsize \textcolor{black!55}{[94,97]}}} & 30 & 0 \\
      \texttt{cube-triple} & \makecell[tc]{11.1 \\[-3pt] {\scriptsize \textcolor{black!55}{[8,14]}}} & \makecell[tc]{0.7 \\[-3pt] {\scriptsize \textcolor{black!55}{[0,1]}}} & \makecell[tc]{\barnum{36.2} \\[-3pt] {\scriptsize \textcolor{black!55}{[33,39]}}} & \makecell[tc]{9.3 \\[-3pt] {\scriptsize \textcolor{black!55}{[7,11]}}} & \makecell[tc]{6.9 \\[-3pt] {\scriptsize \textcolor{black!55}{[2,12]}}} & \makecell[tc]{\barnum{31.2} \\[-3pt] {\scriptsize \textcolor{black!55}{[27,35]}}} & \makecell[tc]{0.0 \\[-3pt] {\scriptsize \textcolor{black!55}{[0,0]}}} & \makecell[tc]{0.0 \\[-3pt] {\scriptsize \textcolor{black!55}{[0,0]}}} & \makecell[tc]{\barnum{0.2} \\[-3pt] {\scriptsize \textcolor{black!55}{[0,0]}}} & \makecell[tc]{35.2 \\[-3pt] {\scriptsize \textcolor{black!55}{[33,37]}}} & \makecell[tc]{\barnum{\textbf{52.1}} \\[-3pt] {\scriptsize \textcolor{black!55}{[50,54]}}} & 0 & 0 \\
      \texttt{scene} & \makecell[tc]{90.6 \\[-3pt] {\scriptsize \textcolor{black!55}{[88,93]}}} & \makecell[tc]{\barnum{\textbf{95.1}} \\[-3pt] {\scriptsize \textcolor{black!55}{[93,97]}}} & \makecell[tc]{\barnum{\textbf{95.4}} \\[-3pt] {\scriptsize \textcolor{black!55}{[94,97]}}} & \makecell[tc]{\barnum{92.3} \\[-3pt] {\scriptsize \textcolor{black!55}{[83,98]}}} & \makecell[tc]{68.1 \\[-3pt] {\scriptsize \textcolor{black!55}{[61,75]}}} & \makecell[tc]{\barnum{93.9} \\[-3pt] {\scriptsize \textcolor{black!55}{[89,98]}}} & \makecell[tc]{69.4 \\[-3pt] {\scriptsize \textcolor{black!55}{[67,72]}}} & \makecell[tc]{\barnum{90.4} \\[-3pt] {\scriptsize \textcolor{black!55}{[86,94]}}} & \makecell[tc]{\barnum{89.5} \\[-3pt] {\scriptsize \textcolor{black!55}{[88,92]}}} & \makecell[tc]{\barnum{\textbf{97.3}} \\[-3pt] {\scriptsize \textcolor{black!55}{[96,98]}}} & \makecell[tc]{\barnum{\textbf{97.0}} \\[-3pt] {\scriptsize \textcolor{black!55}{[95,98]}}} & \textbf{99} & 39 \\
      \texttt{humanoid-md} & \makecell[tc]{72.8 \\[-3pt] {\scriptsize \textcolor{black!55}{[63,87]}}} & \makecell[tc]{71.1 \\[-3pt] {\scriptsize \textcolor{black!55}{[69,73]}}} & \makecell[tc]{\barnum{\textbf{96.2}} \\[-3pt] {\scriptsize \textcolor{black!55}{[94,98]}}} & \makecell[tc]{40.4 \\[-3pt] {\scriptsize \textcolor{black!55}{[39,42]}}} & \makecell[tc]{28.6 \\[-3pt] {\scriptsize \textcolor{black!55}{[25,32]}}} & \makecell[tc]{\barnum{65.2} \\[-3pt] {\scriptsize \textcolor{black!55}{[54,76]}}} & \makecell[tc]{20.5 \\[-3pt] {\scriptsize \textcolor{black!55}{[11,31]}}} & \makecell[tc]{7.9 \\[-3pt] {\scriptsize \textcolor{black!55}{[6,10]}}} & \makecell[tc]{\barnum{48.8} \\[-3pt] {\scriptsize \textcolor{black!55}{[36,62]}}} & \makecell[tc]{15.4 \\[-3pt] {\scriptsize \textcolor{black!55}{[8,25]}}} & \makecell[tc]{\barnum{22.2} \\[-3pt] {\scriptsize \textcolor{black!55}{[14,30]}}} & \makecell[tc]{3.5 \\[-3pt] {\scriptsize \textcolor{black!55}{[2,5]}}} & \makecell[tc]{23.1 \\[-3pt] {\scriptsize \textcolor{black!55}{[20,26]}}} \\
      \texttt{antmaze-giant} & \makecell[tc]{0.1 \\[-3pt] {\scriptsize \textcolor{black!55}{[0,0]}}} & \makecell[tc]{37.5 \\[-3pt] {\scriptsize \textcolor{black!55}{[34,41]}}} & \makecell[tc]{\barnum{44.8} \\[-3pt] {\scriptsize \textcolor{black!55}{[33,56]}}} & \makecell[tc]{4.8 \\[-3pt] {\scriptsize \textcolor{black!55}{[0,14]}}} & \makecell[tc]{28.8 \\[-3pt] {\scriptsize \textcolor{black!55}{[26,32]}}} & \makecell[tc]{\barnum{53.3} \\[-3pt] {\scriptsize \textcolor{black!55}{[51,56]}}} & \makecell[tc]{23.7 \\[-3pt] {\scriptsize \textcolor{black!55}{[12,36]}}} & \makecell[tc]{19.0 \\[-3pt] {\scriptsize \textcolor{black!55}{[18,20]}}} & \makecell[tc]{\barnum{\textbf{65.4}} \\[-3pt] {\scriptsize \textcolor{black!55}{[62,69]}}} & \makecell[tc]{22.2 \\[-3pt] {\scriptsize \textcolor{black!55}{[12,34]}}} & \makecell[tc]{\barnum{40.7} \\[-3pt] {\scriptsize \textcolor{black!55}{[37,45]}}} & \makecell[tc]{57.1 \\[-3pt] {\scriptsize \textcolor{black!55}{[53,62]}}} & \makecell[tc]{3.2 \\[-3pt] {\scriptsize \textcolor{black!55}{[2,4]}}} \\
      \midrule
      \textbf{Total} & \makecell[tc]{48.6 \\[-3pt] {\scriptsize \textcolor{black!55}{[46,51]}}} & \makecell[tc]{54.5 \\[-3pt] {\scriptsize \textcolor{black!55}{[54,55]}}} & \makecell[tc]{\barnum{\textbf{71.2}} \\[-3pt] {\scriptsize \textcolor{black!55}{[69,74]}}} & \makecell[tc]{33.0 \\[-3pt] {\scriptsize \textcolor{black!55}{[31,35]}}} & \makecell[tc]{36.4 \\[-3pt] {\scriptsize \textcolor{black!55}{[34,39]}}} & \makecell[tc]{\barnum{66.0} \\[-3pt] {\scriptsize \textcolor{black!55}{[63,69]}}} & \makecell[tc]{22.8 \\[-3pt] {\scriptsize \textcolor{black!55}{[20,26]}}} & \makecell[tc]{23.5 \\[-3pt] {\scriptsize \textcolor{black!55}{[22,24]}}} & \makecell[tc]{\barnum{44.5} \\[-3pt] {\scriptsize \textcolor{black!55}{[42,47]}}} & \makecell[tc]{52.9 \\[-3pt] {\scriptsize \textcolor{black!55}{[50,56]}}} & \makecell[tc]{\barnum{61.4} \\[-3pt] {\scriptsize \textcolor{black!55}{[60,63]}}} & 38 & 13 \\
      \bottomrule
    \end{tabular}%
  }
    \label{table:OGBench_success}
  \end{small}
\end{table}

\vspace{-1\baselineskip}
\begin{table}[H]
  
  \caption{\textbf{\texttt{humanoidmaze-giant} per-task success rate (\%)} at the end of online training (mean across seeds), for the runs in Figure~\ref{fig:humanoidmaze_giant}. \textbf{Bold} marks methods within 95\% of the per-row maximum. The Total row is the equal-weight mean over the 5 \texttt{humanoidmaze-giant} tasks. Numbers in brackets are the 95\% bootstrap CI; for the Total row, the bootstrap is stratified by task.}
  \centering
  \begin{small}
  \setlength{\tabcolsep}{4pt}
  \renewcommand{\arraystretch}{1.7}
    \begin{tabular}{lcccccc}
      \toprule
       & TD & TD-$4$ & TD-$8$ & TD-$16$ & TD-$64$ & LQL ($L=64$) \\
      \midrule
      \texttt{task1} & \makecell[tc]{0.0 \\[-3pt] {\scriptsize \textcolor{black!55}{[0,0]}}} & \makecell[tc]{18.0 \\[-3pt] {\scriptsize \textcolor{black!55}{[12,28]}}} & \makecell[tc]{1.3 \\[-3pt] {\scriptsize \textcolor{black!55}{[0,2]}}} & \makecell[tc]{0.0 \\[-3pt] {\scriptsize \textcolor{black!55}{[0,0]}}} & \makecell[tc]{0.0 \\[-3pt] {\scriptsize \textcolor{black!55}{[0,0]}}} & \makecell[tc]{\textbf{31.3} \\[-3pt] {\scriptsize \textcolor{black!55}{[2,58]}}} \\
      \texttt{task2} & \makecell[tc]{0.0 \\[-3pt] {\scriptsize \textcolor{black!55}{[0,0]}}} & \makecell[tc]{63.3 \\[-3pt] {\scriptsize \textcolor{black!55}{[50,76]}}} & \makecell[tc]{36.0 \\[-3pt] {\scriptsize \textcolor{black!55}{[24,56]}}} & \makecell[tc]{6.7 \\[-3pt] {\scriptsize \textcolor{black!55}{[4,10]}}} & \makecell[tc]{4.7 \\[-3pt] {\scriptsize \textcolor{black!55}{[2,6]}}} & \makecell[tc]{\textbf{97.3} \\[-3pt] {\scriptsize \textcolor{black!55}{[96,100]}}} \\
      \texttt{task3} & \makecell[tc]{0.0 \\[-3pt] {\scriptsize \textcolor{black!55}{[0,0]}}} & \makecell[tc]{7.3 \\[-3pt] {\scriptsize \textcolor{black!55}{[4,10]}}} & \makecell[tc]{4.7 \\[-3pt] {\scriptsize \textcolor{black!55}{[0,12]}}} & \makecell[tc]{0.0 \\[-3pt] {\scriptsize \textcolor{black!55}{[0,0]}}} & \makecell[tc]{0.0 \\[-3pt] {\scriptsize \textcolor{black!55}{[0,0]}}} & \makecell[tc]{\textbf{70.7} \\[-3pt] {\scriptsize \textcolor{black!55}{[66,76]}}} \\
      \texttt{task4} & \makecell[tc]{0.0 \\[-3pt] {\scriptsize \textcolor{black!55}{[0,0]}}} & \makecell[tc]{4.7 \\[-3pt] {\scriptsize \textcolor{black!55}{[4,6]}}} & \makecell[tc]{0.0 \\[-3pt] {\scriptsize \textcolor{black!55}{[0,0]}}} & \makecell[tc]{0.0 \\[-3pt] {\scriptsize \textcolor{black!55}{[0,0]}}} & \makecell[tc]{0.0 \\[-3pt] {\scriptsize \textcolor{black!55}{[0,0]}}} & \makecell[tc]{\textbf{80.7} \\[-3pt] {\scriptsize \textcolor{black!55}{[64,92]}}} \\
      \texttt{task5} & \makecell[tc]{0.0 \\[-3pt] {\scriptsize \textcolor{black!55}{[0,0]}}} & \makecell[tc]{\textbf{98.7} \\[-3pt] {\scriptsize \textcolor{black!55}{[96,100]}}} & \makecell[tc]{90.0 \\[-3pt] {\scriptsize \textcolor{black!55}{[82,96]}}} & \makecell[tc]{61.3 \\[-3pt] {\scriptsize \textcolor{black!55}{[54,66]}}} & \makecell[tc]{26.0 \\[-3pt] {\scriptsize \textcolor{black!55}{[20,36]}}} & \makecell[tc]{\textbf{98.7} \\[-3pt] {\scriptsize \textcolor{black!55}{[96,100]}}} \\
      \midrule
      \textbf{Total} & \makecell[tc]{0.0 \\[-3pt] {\scriptsize \textcolor{black!55}{[0,0]}}} & \makecell[tc]{38.4 \\[-3pt] {\scriptsize \textcolor{black!55}{[35,41]}}} & \makecell[tc]{26.4 \\[-3pt] {\scriptsize \textcolor{black!55}{[23,30]}}} & \makecell[tc]{13.6 \\[-3pt] {\scriptsize \textcolor{black!55}{[12,15]}}} & \makecell[tc]{6.1 \\[-3pt] {\scriptsize \textcolor{black!55}{[5,8]}}} & \makecell[tc]{\textbf{75.7} \\[-3pt] {\scriptsize \textcolor{black!55}{[70,81]}}} \\
      \bottomrule
    \end{tabular}
    \label{table:batch_scale_giant}
  \end{small}
\end{table}

\vspace{-1\baselineskip}
\begin{table}[H]
  \caption{\textbf{RoboMimic results.} Each cell is the success rate (\%) at the end of online training (mean across seeds). The Total row is the equal-weight mean of Square and Can. \textbf{Bold} marks methods within 95\% of the row maximum; an overbar marks methods within 95\% of the per-actor maximum.}
  \centering
  \begin{small}
  \setlength{\tabcolsep}{4pt}
  \renewcommand{\arraystretch}{1.7}
    \begin{tabular}{l|ccc|ccc|ccc|cc}
      \toprule
       & \multicolumn{3}{c|}{\textbf{Best-of-$N$}} & \multicolumn{3}{c|}{\textbf{FQL}} & \multicolumn{3}{c|}{\textbf{Gaussian}} & \multicolumn{2}{c}{\textbf{Action Chunking}} \\
        \cmidrule(lr){2-4} \cmidrule(lr){5-7} \cmidrule(lr){8-10} \cmidrule(lr){11-12}
       & TD & TD-$n$ & LQL & TD & TD-$n$ & LQL & TD & TD-$n$ & LQL & TD & LQL \\
      \midrule
      Square & \makecell[tc]{23.0 \\[-3pt] {\scriptsize \textcolor{black!55}{[19,27]}}} & \makecell[tc]{23.0 \\[-3pt] {\scriptsize \textcolor{black!55}{[16,28]}}} & \makecell[tc]{\barnum{42.5} \\[-3pt] {\scriptsize \textcolor{black!55}{[39,46]}}} & \makecell[tc]{0.0 \\[-3pt] {\scriptsize \textcolor{black!55}{[0,0]}}} & \makecell[tc]{15.5 \\[-3pt] {\scriptsize \textcolor{black!55}{[8,22]}}} & \makecell[tc]{\barnum{61.0} \\[-3pt] {\scriptsize \textcolor{black!55}{[60,63]}}} & \makecell[tc]{7.5 \\[-3pt] {\scriptsize \textcolor{black!55}{[3,12]}}} & \makecell[tc]{\barnum{\textbf{92.5}} \\[-3pt] {\scriptsize \textcolor{black!55}{[90,95]}}} & \makecell[tc]{\barnum{\textbf{92.5}} \\[-3pt] {\scriptsize \textcolor{black!55}{[90,94]}}} & \makecell[tc]{20.5 \\[-3pt] {\scriptsize \textcolor{black!55}{[19,22]}}} & \makecell[tc]{\barnum{23.5} \\[-3pt] {\scriptsize \textcolor{black!55}{[16,31]}}} \\
      Can & \makecell[tc]{81.0 \\[-3pt] {\scriptsize \textcolor{black!55}{[77,84]}}} & \makecell[tc]{80.5 \\[-3pt] {\scriptsize \textcolor{black!55}{[77,86]}}} & \makecell[tc]{\barnum{\textbf{87.0}} \\[-3pt] {\scriptsize \textcolor{black!55}{[86,88]}}} & \makecell[tc]{3.0 \\[-3pt] {\scriptsize \textcolor{black!55}{[0,6]}}} & \makecell[tc]{69.5 \\[-3pt] {\scriptsize \textcolor{black!55}{[63,73]}}} & \makecell[tc]{\barnum{\textbf{90.0}} \\[-3pt] {\scriptsize \textcolor{black!55}{[84,96]}}} & \makecell[tc]{2.5 \\[-3pt] {\scriptsize \textcolor{black!55}{[1,4]}}} & \makecell[tc]{\barnum{82.5} \\[-3pt] {\scriptsize \textcolor{black!55}{[80,88]}}} & \makecell[tc]{\barnum{84.0} \\[-3pt] {\scriptsize \textcolor{black!55}{[78,90]}}} & \makecell[tc]{\barnum{82.0} \\[-3pt] {\scriptsize \textcolor{black!55}{[80,84]}}} & \makecell[tc]{\barnum{\textbf{85.5}} \\[-3pt] {\scriptsize \textcolor{black!55}{[80,90]}}} \\
      \midrule
      \textbf{Total} & \makecell[tc]{52.0 \\[-3pt] {\scriptsize \textcolor{black!55}{[50,55]}}} & \makecell[tc]{51.7 \\[-3pt] {\scriptsize \textcolor{black!55}{[48,55]}}} & \makecell[tc]{\barnum{64.8} \\[-3pt] {\scriptsize \textcolor{black!55}{[63,66]}}} & \makecell[tc]{1.5 \\[-3pt] {\scriptsize \textcolor{black!55}{[0,3]}}} & \makecell[tc]{42.5 \\[-3pt] {\scriptsize \textcolor{black!55}{[38,47]}}} & \makecell[tc]{\barnum{75.5} \\[-3pt] {\scriptsize \textcolor{black!55}{[72,78]}}} & \makecell[tc]{5.0 \\[-3pt] {\scriptsize \textcolor{black!55}{[3,8]}}} & \makecell[tc]{\barnum{\textbf{87.5}} \\[-3pt] {\scriptsize \textcolor{black!55}{[86,90]}}} & \makecell[tc]{\barnum{\textbf{88.2}} \\[-3pt] {\scriptsize \textcolor{black!55}{[85,91]}}} & \makecell[tc]{51.2 \\[-3pt] {\scriptsize \textcolor{black!55}{[50,52]}}} & \makecell[tc]{\barnum{54.5} \\[-3pt] {\scriptsize \textcolor{black!55}{[50,59]}}} \\
      \bottomrule
    \end{tabular}
    \label{table:robomimic_success}
  \end{small}
\end{table}

\newpage

\subsection{Comparison to Optimality Tightening}
\label{app:ot_cmp}

We cloned the optimality tightening codebase of \citet{he_learning_2016} and updated their discrete-action-only implementation to use a continuous actor with a tanh-squashed Gaussian (like the Gaussian actor we use for LQL and TD($-n$)); our adapted code is available at \url{https://github.com/armaan-abraham/Q-Optimality-Tightening}. We used default hyperparameters from the repository, after matching the network size of the critic and the actor to those used in our experiments. We evaluated this agent for the same number of online training steps ($2 \times 10^6$) as the gaussian LQL agent, and compare them in Figure~\ref{fig:ot_cmp_success}.

\begin{figure}[H]
    \centering
    \includegraphics[width=\linewidth]{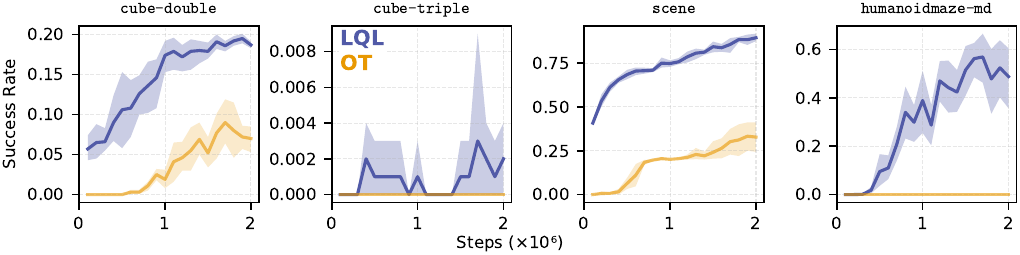}
    \caption{Mean success rate of LQL and OT across 4 task groups, evaluated on all 5 tasks in the group and with 4 seeds per task.}
    \label{fig:ot_cmp_success}
\end{figure}

\begin{table}[H]
  \caption{\textbf{LQL vs.\ OT.} Each cell is the success rate (\%) at the end of online training (mean across seeds, equal-weight across the 5 tasks per group). \textbf{Bold} marks the per-row maximum. The Total row is the equal-weight mean over the 4 task groups. Numbers in brackets are the 95\% bootstrap CI; for the Total row, the bootstrap is stratified by task. Results for LQL are identical to the Gaussian column in Table~\ref{table:OGBench_success}.}
  \centering
  \begin{small}
  \setlength{\tabcolsep}{4pt}
  \renewcommand{\arraystretch}{1.7}
    \begin{tabular}{lcc}
      \toprule
       & LQL & OT \\
      \midrule
      \texttt{cube-double} & \makecell[tc]{\textbf{18.7} \\[-3pt] {\scriptsize \textcolor{black!55}{[18,19]}}} & \makecell[tc]{7.0 \\[-3pt] {\scriptsize \textcolor{black!55}{[5,9]}}} \\
      \texttt{cube-triple} & \makecell[tc]{\textbf{0.2} \\[-3pt] {\scriptsize \textcolor{black!55}{[0,0]}}} & \makecell[tc]{0.0 \\[-3pt] {\scriptsize \textcolor{black!55}{[0,0]}}} \\
      \texttt{scene} & \makecell[tc]{\textbf{89.5} \\[-3pt] {\scriptsize \textcolor{black!55}{[88,92]}}} & \makecell[tc]{32.8 \\[-3pt] {\scriptsize \textcolor{black!55}{[25,41]}}} \\
      \texttt{humanoid-md} & \makecell[tc]{\textbf{48.8} \\[-3pt] {\scriptsize \textcolor{black!55}{[36,62]}}} & \makecell[tc]{0.0 \\[-3pt] {\scriptsize \textcolor{black!55}{[0,0]}}} \\
      \midrule
      \textbf{Total} & \makecell[tc]{\textbf{39.3} \\[-3pt] {\scriptsize \textcolor{black!55}{[36,43]}}} & \makecell[tc]{10.0 \\[-3pt] {\scriptsize \textcolor{black!55}{[8,12]}}} \\
      \bottomrule
    \end{tabular}
    \label{table:ot_lql_cmp}
  \end{small}
\end{table}

\subsection{Stochastic environments}

\label{app:stochastic_envs_details}

For the experiments in Section~\ref{subsec:stochastic_envs} (Figure~\ref{fig:stochastic_envs}), we created stochastic versions of OGBench environments by adding independent Gaussian noise to each dimension of the action before executing it in the environment, clipping to the action range $[-1, 1]$. For each degree of stochasticity, labeled by the standard deviation of the corresponding Gaussian, we recollected a separate offline dataset using the method from \citet{park_ogbench_2025} with this updated stochasticity, yielding a dataset of the same size as in Table~\ref{table:task_characteristics}.

\begin{figure}[H]
    \centering
    \includegraphics[width=0.75\linewidth]{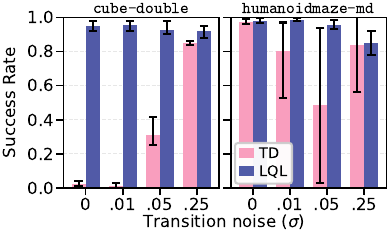}
    \caption{\textbf{LQL continues to perform on par with or better than TD in stochastic environments.} Each panel shows \texttt{task2} of the listed task group.}
    \label{fig:stochastic_envs}
\end{figure}

\subsection{Hinge coefficient sweep}

\begin{figure}[H]
    \centering
    \includegraphics[width=0.75\linewidth]{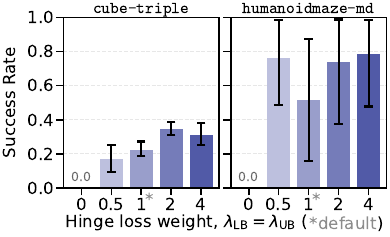}
    \caption{\textbf{Hinge coefficient sweep.} FQL actor was used with otherwise the same hyperparameters as Tables \ref{table:hyperparams}, \ref{table:alpha_sets}. Each plot shows \texttt{task3} of the listed task group. Four seeds.}
    \label{fig:lambda_sweep}
\end{figure}

\newpage

\subsection{Isolating the effects of trajectory sampling and hinge penalties}
\label{app:sample_ctrl}

LQL differs from standard TD both in \emph{what} is sampled (short trajectories) and \emph{what loss} is applied (additional hinge penalties). To disentangle these, we set the hinge-loss coefficients in Eq.~\ref{eq:lql_loss_sample_revised} to zero, which reduces LQL to TD learning on sampled trajectories rather than individually sampled transitions. Figure~\ref{fig:sample_ctrl} shows that LQL's hinge penalties yield further gains beyond this trajectory sampling control, supporting the role of the long-horizon backstop. Surprisingly, the trajectory sampling control in some cases performs better than standard transition-level TD. This may reflect the mechanism observed in \citet{fu_compute-optimal_2025}, where smaller batch sizes can yield better performance for TD due to reduced overfitting to inaccurate target network predictions. In this case, sampling transitions within trajectories may reduce the effective diversity of the batch in a way that produces the same effect.

\begin{figure}[H]
    \centering
    \includegraphics[width=\linewidth]{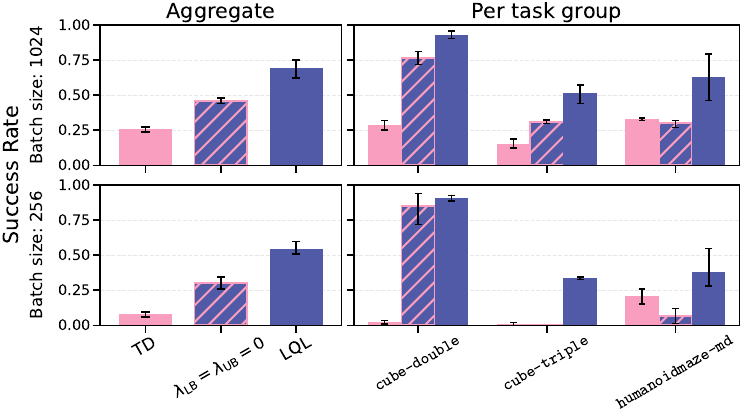}
    \caption{
        \textbf{LQL's gains are not explained by trajectory sampling alone; the hinge backstop contributes beyond this control.} Success rates for FQL policies are averaged over tasks 1--3 in \texttt{cube-double}, \texttt{cube-triple}, and \texttt{humanoidmaze-md}. The top row uses the configuration with the hyperparameters used in the rest of the paper, including a batch size of 1024 (LQL: 128 trajectories of length 8). The bottom row uses batch size 256 (LQL: 64 trajectories of length 4), with $\alpha=50$ for \texttt{cube-double} and \texttt{cube-triple}, and $\alpha=10$ for \texttt{humanoidmaze-md}.
    }
    \label{fig:sample_ctrl}
\end{figure}

\subsection{Batch-size-controlled trajectory length sweep}
\label{app:horizon_ctrl}

Under a fixed compute-per-batch constraint, increasing the trajectory length requires reducing the number of sampled trajectories per batch. Figure~\ref{fig:horizon_ctrl} shows that when the batch size is held constant, the optimal trajectory length varies by task, with performance degrading at longer trajectory lengths in some environments. Taken in isolation, this degradation could be attributed to two potential causes: (a) instability from longer-horizon hinge penalties, or (b) the reduction in the number of sampled trajectories per batch, and the resulting reduced diversity of behavioral policies represented in the batch, yielding noisier hinge, TD, and actor losses alike. However, the result in Figure~\ref{fig:horizon_scale}, in which the trajectory length is scaled without a corresponding reduction in the number of sampled trajectories per batch, shows consistent performance improvements with longer trajectories, supporting (b) over (a).

\begin{figure}[H]
    \centering
    \includegraphics[width=0.75\linewidth]{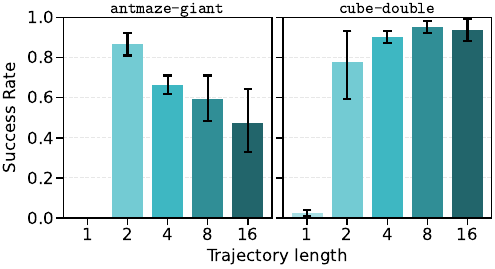}
    \caption{
        \textbf{With fixed batch size, optimal LQL trajectory length varies by task.} Each plot shows performance of FQL policies on \texttt{task2} of the listed environment. We keep the batch size fixed at 1024.
    }
    \label{fig:horizon_ctrl}
\end{figure}

\subsection{Scaling trajectory length for LQL vs. batch size for TD}
\label{app:horizon_batch_scale}

We conduct more experiments of the form shown in Figure~\ref{fig:horizon_scale}, namely scaling trajectory length, additionally showing the effect of an equivalent increase in batch size for TD. These experiments are conducted at a smaller batch size for both methods, with FQL actors with $\alpha=10$ for \texttt{humanoidmaze-md} and $\alpha=50$ for \texttt{cube-double}. Reflecting previous work~\citep{fu_compute-optimal_2025}, TD does not respond favorably to batch size scaling, while LQL consistently improves with longer trajectories (Figure~\ref{fig:horizon_batch_scale}).

\begin{figure}[H]
    \centering
    \includegraphics[width=\linewidth]{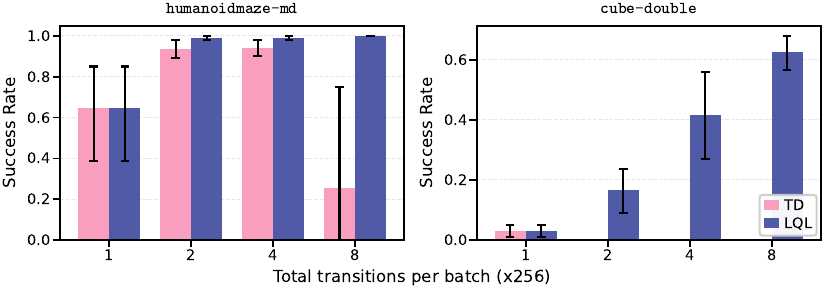}
    \caption{
        \textbf{Scaling compute via longer LQL trajectories is more effective than scaling TD with more independent transitions.} For matched scaling factors on the x-axis (which also correspond to LQL trajectory length), LQL benefits consistently from longer segments, while TD does not reliably improve with larger batches of individually sampled transitions. Each panel shows \texttt{task2} of the listed environment. Identical results are used for TD with batch size 256 and LQL with horizon length 1, because these two methods are identical in this case.
    }
    \label{fig:horizon_batch_scale}
\end{figure}

\subsection{Hinge penalty activation vs.\ hinge distance}
\label{app:hinge_vs_t}

While training the LQL-with-FQL-actor agent, we recorded the hinge penalty activation frequency and magnitude throughout training, grouped by whether the penalty was a lower bound or upper bound and by the distance over which the penalty was computed in Figure~\ref{fig:hinge_vs_t}. The penalty magnitude is measured as the average over all pairwise comparisons, including those that are zero (i.e., unactivated). The y-axis range of 2 to 8 for the lower bound and 0 to 6 for the upper bound reflects that the lower bound skips the hinge comparison for the next state since it is already included in the TD loss, while the upper bound includes the zero comparison because the same-state upper bound uses the target network evaluated on the policy-generated action at the same state, which is already computed for the TD update at the previous transition. We see that broadly the hinge penalties activate more frequently for shorter-distance comparisons, but conversely tend to be larger in magnitude for longer-distance comparisons. One vaguely apparent pattern is that the hinge penalties amplify after a small delay from the beginning of online training. 

\begin{figure}[H]
    \centering
    \includegraphics[width=\linewidth]{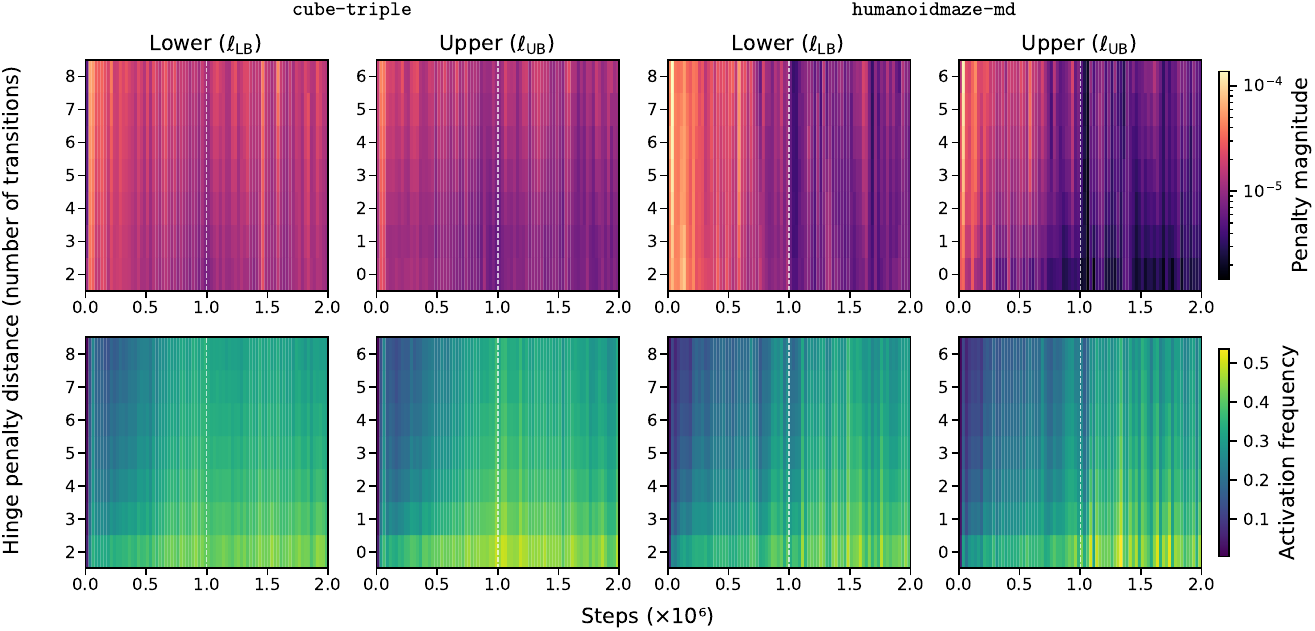}
    \caption{\textbf{Hinge penalty activation frequency and magnitude show task and training stage-dependent patterns}. Penalty magnitude is normalized by $Q_\theta^2$ (using the batch-mean $Q_\theta$ at each step). Offline-online training transition marked by white dashed line. Averaged over two seeds per task (\texttt{task1} of each task group).}
    \label{fig:hinge_vs_t}
\end{figure}

\subsection{Computational requirements}
\label{app:compute_requirements}

Across the policy extraction families used in the main experiments, LQL incurs a $4.7\%$ per-update slowdown on average over TD (Table~\ref{table:runtimes}).

\begin{table}[H]
  \caption{\textbf{Runtime comparison.} Iterations per second on a NVIDIA A40 GPU. Offline iter/sec for Best-of-$N$ and FQL; online for Gaussian.}
  \centering
  \begin{small}
    \begin{tabular}{lccc}
      \toprule
      Policy type & TD (iter/sec) & LQL (iter/sec) & Slowdown  \\
      \midrule
      Best-of-$N$        & 57.1 & 56.0 & 2.0\% \\
      FQL                & 288.5 & 273.1 & 5.3\% \\
      Gaussian           & 138.8 & 132.5 & 4.5\% \\
      \midrule
      Average   & 161.5 & 153.9 & 4.7\% \\
      \bottomrule
    \end{tabular}
    \label{table:runtimes}
  \end{small}
\end{table}

For the \texttt{humanoidmaze-giant} runs (Figure~\ref{fig:humanoidmaze_giant}) which use a longer trajectory length $L=64$, one potential concern is that the $\mathcal{O}(L^2)$ pairwise comparisons would dramatically slow down training. Table~\ref{table:runtimes_high_L} shows that at this larger trajectory length, the additional runtime cost of LQL remains negligible.

\begin{table}[H]
  \caption{\textbf{Offline iterations per second} on a NVIDIA H100 GPU, for \texttt{humanoidmaze-giant} experiments in Figure~\ref{fig:humanoidmaze_giant}.}
  \centering
  \begin{small}
    \begin{tabular}{lc}
      \toprule
      Method & iter/sec \\
      \midrule
      TD            & 29.91 \\
      TD-$4$        & 29.36 \\
      TD-$8$        & 30.06 \\
      TD-$16$       & 30.04 \\
      TD-$64$       & 29.76 \\
      LQL ($L=64$)  & 29.72 \\
      \bottomrule
    \end{tabular}
    \label{table:runtimes_high_L}
  \end{small}
\end{table}

\newpage

\section{Theoretical Analysis}
\label{app:theory}

\subsection{Bounded false penalties due to stochasticity}
\label{app:false_penalties}

We analyze both hinge penalties (LB, then UB) at $Q_\theta=Q^*$ in stochastic environments, and denote these as false penalties. We establish bounds on false penalties that are independent of the number of steps $L$ over which the penalties are computed.

\paragraph{LB hinge.}
The $L$-step LB violation signal at trajectory position $i$ is
\[
Z_L = G_{i:i+L} + \gamma^L Q^*(s_{i+L}, a^*(s_{i+L})) - Q^*(s_i, a_i).
\]
Telescoping via the Bellman equation gives
\[
Z_L = \sum_{k=0}^{L-1} \gamma^k \epsilon_{i+k} \;-\; \gamma\sum_{k=0}^{L-2}\gamma^k \Delta_{i+k+1},
\]
where $\epsilon_j = r_j + \gamma Q^*(s_{j+1}, a^*(s_{j+1})) - Q^*(s_j, a_j)$ is the 1-step Bellman noise (with $\mathbb{E}[\epsilon_j \mid s_j, a_j] = 0$) and $\Delta_j = Q^*(s_j, a^*(s_j)) - Q^*(s_j, a_j) \geq 0$ is the suboptimality gap.

\paragraph{Per-step decomposition.}
Re-indexing the gap sum (substituting $j = k+1$) yields
\[
Z_L = \epsilon_i + \sum_{j=1}^{L-1} \gamma^j\bigl(\epsilon_{i+j} - \Delta_{i+j}\bigr).
\]
Define per-step terms
\[
W_j \;\triangleq\;
\begin{cases}
\epsilon_{i} & j = 0,\\
\epsilon_{i+j} - \Delta_{i+j} & j = 1, \ldots, L-1,
\end{cases}
\]
so that $Z_L = \sum_{j=0}^{L-1} \gamma^j W_j$.

\paragraph{Assumption.}
\emph{Bounded rewards}: $|r_j| \leq R_{\max}$ for all $j$.

This implies $|Q^*(s,a)| \leq Q_{\max} \triangleq R_{\max}/(1-\gamma)$ and therefore $|\epsilon_j| \leq R_{\max} + (1+\gamma)Q_{\max}$ and $|\Delta_j| \leq 2Q_{\max}$.  Consequently, every $W_j$ is bounded:
\[
|W_j| \;\leq\; M \;\triangleq\; R_{\max} + (1+\gamma)Q_{\max} + 2Q_{\max},
\]
and in particular $\mathbb{E}[W_j^2] \leq M^2$ for all $j$.

\paragraph{Mean of $Z_L$.}
All expectations below are over the replay trajectory distribution.
Since $\mathbb{E}[\epsilon_{i+j} \mid s_{i+j}, a_{i+j}] = 0$ and $\Delta_{i+j} \geq 0$, for $L \geq 2$:
\[
\mathbb{E}[Z_L]
= \sum_{j=0}^{L-1} \gamma^j \mathbb{E}[W_j]
= -\sum_{j=1}^{L-1}\gamma^j \,\mathbb{E}[\Delta_{i+j}]
\;\triangleq\; -\mu_D(L) \;\leq\; 0.
\]
Defining $\bar\Delta_{\mathrm{LB}} \triangleq \mathbb{E}[\Delta_{i+1}]$, $\mu_D(L) \geq \gamma\bar\Delta_{\mathrm{LB}}$ because every $\Delta_{i+j} \geq 0$.

\paragraph{Variance bound.}
Expanding $\operatorname{Var}(Z_L)$:
\[
\operatorname{Var}(Z_L)
= \sum_{j=0}^{L-1}\sum_{j'=0}^{L-1} \gamma^{j+j'}\operatorname{Cov}(W_j, W_{j'}).
\]
By Cauchy--Schwarz,
\[
|\operatorname{Cov}(W_j, W_{j'})| \;\leq\; \sqrt{\operatorname{Var}(W_j)\,\operatorname{Var}(W_{j'})} \;\leq\; \sqrt{\mathbb{E}[W_j^2]\,\mathbb{E}[W_{j'}^2]} \;\leq\; M^2.
\]
Therefore
\[
\operatorname{Var}(Z_L)
\;\leq\; M^2 \sum_{j=0}^{L-1}\sum_{j'=0}^{L-1} \gamma^{j+j'}
= M^2 \!\left(\sum_{j=0}^{L-1}\gamma^j\right)^{\!2}
\;\leq\; M^2 \!\left(\frac{1}{1-\gamma}\right)^{\!2}
\;\triangleq\; V^\infty,
\]
which is finite and independent of $L$.

\begin{lemma}
\label{lemma:cantelli}
For any random variable $X$ with $\mathbb{E}[X] = -\mu$ $(\mu \geq 0)$ and $\operatorname{Var}(X) \leq V$:
\begin{align}
\Pr(X > 0) &\leq \frac{V}{V + \mu^2},\label{eq:cantelli}\\[4pt]
\mathbb{E}\bigl[[X]_+\bigr] &\leq \frac{V}{2\bigl(\!\sqrt{V + \mu^2} + \mu\bigr)}.\label{eq:cantelli_exp}
\end{align}
\end{lemma}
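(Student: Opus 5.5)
The plan is to prove both inequalities by the same device. We bound the relevant function of $X$ pointwise by a quadratic polynomial in $X$, take expectations using only $\mathbb{E}[X]=-\mu$ and $\mathbb{E}[X^2]=\operatorname{Var}(X)+\mu^2\le V+\mu^2$, and then optimize the free parameter of the quadratic. This is the standard one-sided Chebyshev (Cantelli) argument, applied once to the indicator $\mathbf{1}\{X>0\}$ and once to the hinge $[X]_+$.

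For \eqref{eq:cantelli}, fix $u>0$. On the event $\{X>0\}$ we have $X+u>u>0$, so $\mathbf{1}\{X>0\}\le (X+u)^2/u^2$ pointwise; this holds trivially off the event as well. Taking expectations gives
\[
\Pr(X>0)\le \frac{\mathbb{E}[(X+u)^2]}{u^2}.
\]
This crude version loses the mean, so instead we shift by the mean. Write $Y=X+\mu$, so that $\mathbb{E}[Y]=0$, $\operatorname{Var}(Y)\le V$, and $\{X>0\}=\{Y>\mu\}$. For $u\ge 0$ with $\mu+u>0$, on $\{Y>\mu\}$ we have $Y+u>\mu+u>0$, hence
\[
\mathbf{1}\{Y>\mu\}\le \frac{(Y+u)^2}{(\mu+u)^2},
\qquad
\Pr(X>0)\le \frac{V+u^2}{(\mu+u)^2}.
\]
Choosing $u=V/\mu$ (when $\mu>0$) gives exactly $V/(V+\mu^2)$. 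When $\mu=0$ the claimed bound equals $1$, or is the degenerate $0/0$ case $V=\mu=0$ in which $X=0$ a.s., so it holds trivially.

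For \eqref{eq:cantelli_exp}, we use the pointwise inequality
\[
[x]_+\le \frac{(x+c)^2}{4c}\qquad (c>0).
\]
For $x\ge 0$ it follows from $(x+c)^2-4cx=(x-c)^2\ge0$, and for $x<0$ the right side is nonnegative. Taking expectations gives
\[
\mathbb{E}[X_+]\le \frac{V+(c-\mu)^2}{4c}.
\]
Minimizing over $c$ gives $c^2=V+\mu^2$. Substituting $c=\sqrt{V+\mu^2}$, the numerator becomes $2c^2-2\mu c$, so the bound is $(c-\mu)/2$. Rationalizing,
\[
\frac{c-\mu}{2}=\frac{c^2-\mu^2}{2(c+\mu)}=\frac{V}{2\bigl(\sqrt{V+\mu^2}+\mu\bigr)},
\]
which is the claim. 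The degenerate case $V+\mu^2=0$ again means $X=0$ a.s., and the statement is read with both sides $0$.

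The only real obstacle is finding the right comparison quadratics: centering the Chebyshev argument at $\mu$ with an auxiliary shift $u$, and the tangent-type bound $[x]_+\le(x+c)^2/(4c)$. Once these are in hand, both results reduce to one-variable minimizations, and the only remaining care is the boundary cases $\mu=0$ and $V=0$.
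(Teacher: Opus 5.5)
Your proof is correct. For \eqref{eq:cantelli} it follows the paper: the paper simply cites Cantelli for $X+\mu$ at threshold $\mu$, and you reprove that inequality by the standard shifted-quadratic argument with $u=V/\mu$.

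For \eqref{eq:cantelli_exp} the routes look different. The paper writes $[X]_+=(X+|X|)/2$, so that $\mathbb{E}\bigl[[X]_+\bigr]=(-\mu+\mathbb{E}|X|)/2$, and then bounds $\mathbb{E}|X|\le\sqrt{\mathbb{E}[X^2]}\le\sqrt{V+\mu^2}$ by Cauchy--Schwarz. You instead use a pointwise quadratic majorant and optimize over $c$. In substance the two coincide. Since $(x+c)^2/(4c)=\tfrac12\bigl(x+\tfrac{x^2+c^2}{2c}\bigr)$, your majorant is the paper's identity with $|x|$ replaced by its AM--GM bound $(x^2+c^2)/(2c)$. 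Minimizing $\mathbb{E}[X^2+c^2]/(2c)$ over $c$ returns exactly $\sqrt{\mathbb{E}[X^2]}$, which is the paper's Cauchy--Schwarz step.

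The paper's version is shorter and shows that the only slack is $\mathbb{E}|X|$ versus its $L^2$ norm. Yours handles both inequalities with one device (a quadratic majorant followed by a one-variable minimization). It is also more careful at the edges: you treat $\mu=0$ and $V=\mu=0$, where the right-hand sides become $0/0$. You also correctly write $\mathbb{E}[X^2]\le V+\mu^2$ where the paper writes an equality.

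One small correction: your ``crude'' bound does not actually lose the mean. Computing exactly, $\mathbb{E}[(X+u)^2]=\operatorname{Var}(X)+(u-\mu)^2$, and the reparametrization $u\mapsto\mu+u$ turns that bound into your shifted one, so the detour can be dropped.
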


\begin{proof}
Inequality~\eqref{eq:cantelli} is the Cantelli inequality \citep{cantelli1928} applied to $\tilde X = X + \mu$ (mean-zero, variance $\leq V$) at threshold $\mu$.

For~\eqref{eq:cantelli_exp}, use the identity $[X]_+ = (X + |X|)/2$, so
\[
\mathbb{E}\bigl[[X]_+\bigr] = \frac{\mathbb{E}[X] + \mathbb{E}\bigl[|X|\bigr]}{2} = \frac{-\mu + \mathbb{E}\bigl[|X|\bigr]}{2}.
\]
To bound $\mathbb{E}\bigl[|X|\bigr]$, apply Cauchy--Schwarz:
\[
\mathbb{E}\bigl[|X|\bigr] = \mathbb{E}\bigl[|X| \cdot 1\bigr] \leq \sqrt{\mathbb{E}\bigl[|X|^2\bigr]}\;\sqrt{\mathbb{E}\bigl[1^2\bigr]} = \sqrt{\mathbb{E}\bigl[X^2\bigr]} = \sqrt{V + \mu^2}.
\]
Substituting:
\[
\mathbb{E}\bigl[[X]_+\bigr] \leq \frac{\sqrt{V + \mu^2} - \mu}{2} \cdot \frac{\sqrt{V + \mu^2} + \mu}{\sqrt{V + \mu^2} + \mu} = \frac{V}{2\bigl(\!\sqrt{V + \mu^2} + \mu\bigr)}.
\]
\end{proof}

\paragraph{False penalty bound.}
The LB hinge penalty produces a false violation whenever $Z_L > 0$. Applying Lemma~\ref{lemma:cantelli} to $Z_L$ for $L \geq 2$, $\mu = \mu_D(L)$, $V \leq V^\infty$, and using $\mu_D(L) \geq \gamma\bar\Delta_{\mathrm{LB}}$:
\begin{align}
\Pr([Z_L]_+^2>0) = \Pr(Z_L > 0) \leq \frac{V^\infty}{V^\infty + \gamma^2\bar\Delta_{\mathrm{LB}}^2},\\
\mathbb{E}\bigl[[Z_L]_+\bigr] \leq \frac{V^\infty}{2\bigl(\!\sqrt{V^\infty + \gamma^2\bar\Delta_{\mathrm{LB}}^2} + \gamma\bar\Delta_{\mathrm{LB}}\bigr)}.
\end{align}
For the expected squared penalty, since $|W_j| \leq M$,
\[
[Z_L]_+ \;\leq\; |Z_L| \;\leq\; M\sum_{j=0}^{L-1}\gamma^j \;\leq\; \frac{M}{1-\gamma} \;=\; \sqrt{V^\infty},
\]
and therefore $[Z_L]_+^2 \leq \sqrt{V^\infty}\,[Z_L]_+$. Taking expectations:
\[
\mathbb{E}\bigl[[Z_L]_+^2\bigr] \;\leq\; \sqrt{V^\infty}\,\mathbb{E}\bigl[[Z_L]_+\bigr] \;\leq\; \frac{(V^\infty)^{3/2}}{2\bigl(\!\sqrt{V^\infty + \gamma^2\bar\Delta_{\mathrm{LB}}^2} + \gamma\bar\Delta_{\mathrm{LB}}\bigr)}.
\]
All bounds are finite, independent of $L$, and decrease with the suboptimality of the experience in the replay buffer $\bar\Delta_{\mathrm{LB}}$.

\paragraph{UB hinge.}
Define the $L$-step UB violation signal at trajectory position $i$ as
\[
U_L = G_{i:i+L} + \gamma^L Q^*(s_{i+L}, a_{i+L}) - Q^*(s_i, a^*(s_i)).
\]
Using the same telescoping as for $Z_L$ and substituting $Q^*(s_i,a^*(s_i)) = Q^*(s_i,a_i) + \Delta_i$, we obtain
\[
U_L
= \sum_{k=0}^{L-1}\gamma^k \epsilon_{i+k} \;-\; \sum_{k=0}^{L}\gamma^k \Delta_{i+k}.
\]

\paragraph{Per-step decomposition.}
Define
\[
\widetilde W_k \;\triangleq\;
\begin{cases}
\epsilon_{i+k} - \Delta_{i+k} & k = 0, \ldots, L-1,\\
-\Delta_{i+L} & k = L,
\end{cases}
\]
so that $U_L = \sum_{k=0}^{L} \gamma^k \widetilde W_k$.  Since $|\epsilon_{i+k}| \leq R_{\max} + (1+\gamma)Q_{\max}$ and $|\Delta_{i+k}| \leq 2Q_{\max}$, every $\widetilde W_k$ satisfies $|\widetilde W_k| \leq M$ with the same constant $M = R_{\max} + (1+\gamma)Q_{\max} + 2Q_{\max}$ as in the lower-bound case.

\paragraph{Mean of $U_L$.}
Since $\mathbb{E}[\epsilon_{i+k} \mid s_{i+k}, a_{i+k}] = 0$ and $\Delta_{i+k} \geq 0$,
\[
\mathbb{E}[U_L]
= -\sum_{k=0}^{L}\gamma^k \mathbb{E}[\Delta_{i+k}]
\;\triangleq\; -\mu_U(L)
\;\leq\; 0.
\]
Defining $\bar\Delta_{\mathrm{UB}} \triangleq \mathbb{E}[\Delta_i]$, we have $\mu_U(L) \geq \bar\Delta_{\mathrm{UB}}$ since every summand is nonnegative.

\paragraph{Variance bound.}
By the same Cauchy--Schwarz argument as for $Z_L$, using $|\widetilde W_k| \leq M$:
\[
\operatorname{Var}(U_L)
\;\leq\; M^2 \!\left(\sum_{k=0}^{L}\gamma^k\right)^{\!2}
\;\leq\; M^2 \!\left(\frac{1}{1-\gamma}\right)^{\!2}
\;=\; V^\infty.
\]

\paragraph{False penalty bound.}
Applying Lemma~\ref{lemma:cantelli} to $U_L$ with $\mu = \mu_U(L) \geq \bar\Delta_{\mathrm{UB}}$ and $V \leq V^\infty$:
\begin{align}
\Pr([U_L]_+^2>0)  &\leq \frac{V^\infty}{V^\infty + \bar\Delta_{\mathrm{UB}}^2},\\
\mathbb{E}\bigl[[U_L]_+^2\bigr] &\leq \frac{(V^\infty)^{3/2}}{2\bigl(\!\sqrt{V^\infty + \bar\Delta_{\mathrm{UB}}^2} + \bar\Delta_{\mathrm{UB}}\bigr)}.
\end{align}
Both bounds are finite, independent of $L$, and decrease with the suboptimality gap $\bar\Delta_{\mathrm{UB}}$ in the replay buffer. The bounds, at the very least, show that sending trajectory length $\to \infty$ will not also send the false penalty $\to \infty$.

\end{document}